\theoremstyle{definition}
\newtheorem{definition}{Definition}
\newtheorem{theorem}{Theorem}
\newtheorem{lemma}{Lemma}
\newtheorem{assumption}{Assumption}
\titlespacing*{\section}{0pt}{0.1\baselineskip}{0.1\baselineskip}
\titlespacing*{\subsection}{0pt}{0.1\baselineskip}{0.1\baselineskip}
\titlespacing*{\subsubsection}{0pt}{0.1\baselineskip}{0.1\baselineskip}
\title{Sign-regularized Multi-task Learning}
\author{%
Johnny Torres\textsuperscript{1} \thanks{\textsuperscript{1} Escuela Superior Politecnica del Litoral (ESPOL); jomatorr@espol.edu.ec \textsuperscript{2}George Mason University (GMU)}
\And
Guangji Bai\textsuperscript{2}
\And
Junxiang Wang\textsuperscript{2} 
\And 
Liang Zhao\textsuperscript{2}
\And
Carmen Vaca\textsuperscript{1}
\And
Cristina Abad\textsuperscript{1}
}
\begin{document}

\maketitle

\begin{abstract}
Multi-task learning is a framework that enforces different learning tasks to share their knowledge to improve their generalization performance. It is a hot and active domain that strives to handle several core issues; particularly, which tasks are correlated and similar, and how to share the knowledge among correlated tasks. Existing works usually do not distinguish the polarity and magnitude of feature weights and commonly rely on linear correlation, due to three major technical challenges in: 1) optimizing the models that regularize feature weight polarity, 2) deciding whether to regularize sign or magnitude, 3) identifying which tasks should share their sign and/or magnitude patterns. To address them, this paper proposes a new multi-task learning framework that can regularize feature weight signs across tasks. We innovatively formulate it as a biconvex inequality constrained optimization with slacks and propose a new efficient algorithm for the optimization with theoretical guarantees on generalization performance and convergence. Extensive experiments on multiple datasets demonstrate the proposed methods' effectiveness, efficiency, and reasonableness of the regularized feature weighted patterns.
\end{abstract}

\section{Introduction}

In the real world, many learning tasks are correlated and have shared knowledge and patterns. For example, the sentiment analysis models built for the texts in different domains (e.g., sports, movie, and politics) exhibit % %have their 
shared patterns (e.g., emotion, icons) and exclusive patterns (e.g., domain-specific terminologies). Multi-task learning is a machine learning framework which makes it possible to
% %can encourage 
different learning tasks to share their common knowledge yet preserve their exclusive characteristics and eventually improve the generalization performance of all the tasks. Multi-task learning has been applied into many types of learning tasks such as supervised, unsupervised, semi-supervised, and reinforcement learning tasks as well as numerous applications such as recommendation systems~\cite{10.1145/3298689.3346997}, natural language understanding~\cite{liu-etal-2019-multi}, computer vision~\cite{Liu_2019_CVPR}, and self-driving cars~\cite{liang2019multi}.

Multi-task learning is an active domain that has attracted much attention and research efforts. The key challenge in multi-task learning is how to selectively transfer information among the related tasks while preventing sharing knowledge between unrelated tasks, also known as \emph{negative transfer}~\cite{pan2009survey}. To achieve this, we must identify: 1) which tasks are correlated, and 2) which types of knowledge can be shared among the correlated tasks. Many research efforts have been recently devoted to address these two core issues. While most of the methods assume that all the tasks are correlated, fast-increase amount of methods are devoted to automatically identify which tasks are correlated and which are not, usually with some assumptions on the correlation patterns such as  tree-structured ~\cite{gornitz2011hierarchical,wang2018incomplete}, clustered~\cite{zhou2011clustered, zhou2013learning, liu2017adaptive}, and graph-structured~\cite{jacob2009clustered, zhou2011clustered, kumar2012learning, zhou2015flexible, liu2017adaptive, yao2019robust}. The price is the increase of computational complexity, risk of over-fitting, and difficult of optimization~\cite{bishop2006pattern} (e.g., involving discrete optimization). To attack the second core issue, different types of shared knowledge have been proposed in terms of model parameters (i.e., feature weights), by assuming that different tasks should share similar typically in terms of magnitude and linear correlation, such as \emph{magnitude of feature weights} (e.g., via $\ell_{1,2}$ norm~\cite{liu2009multi, jalali2010dirty, wang2016multiplicative}), latent topics (e.g., via enforcing \emph{low-rank structure of feature weights} \cite{evgeniou2004regularized, argyriou2008convex, kumar2012learning}), and \emph{value of feature weights} (e.g., squared loss among the feature weights). Existing works typically focus on regularizing the magnitude or the similarity among the weights instead of merely their signs. The readers can refer to~\cite{zhang2017survey} for a comprehensive survey.

Despite the large amount of existing works, many types of real-world tasks correlations cannot be extensively covered, especially those without tight and linear correlations. Frequently, it is appropriate to merely enforce similar polarity but not magnitude of feature weights across different tasks. For example, we may assume the term ``happy'' to contribute positively to the sentiment of a text in both of tasks of movie rating and presidential election, but we do not further require that their strength of importance be similar. In other cases, a feature $A$ that is more important than feature $B$ in one task might not necessarily indicate that it should be more important than $B$ in another task. However, the above issues have not been well explored due to several technical challenges including: \textbf{1). Difficulties in optimizing the models that regularize feature weight polarity.} Feature weight signs involve discrete functions, which makes it difficult to jointly optimize with those continuous optimization problems in current multitask learning frameworks. \textbf{2) Incapability in deciding whether to regularize sign or magnitude.} It is difficult for existing methods to automatically learn and distinguish when the features' weights should share same signs and when to further share similar importance across tasks. \textbf{3) Challenges in identifying which tasks should share their sign and/or magnitude patterns.} Not all tasks may satisfy the regularization on sign and magnitude of weights. We need an efficient algorithm with theoretical guarantees for identifying task relations that satisfy sign regularization.
% There are several existing challenges, including: unable to constrain the sign of the feature weights. enforcement of constraining the magnitude of the feature weights. cannot handle the noise of sign and magnitude of feature weights.
% challenges: sign, magnitude, noise.

To address the above challenges, we propose a new Sign-Regularized Multi-task Learning (\textbf{SRML}) framework that adaptively regularizes weight signs across tasks. The contributions of the paper include:
\vspace{-0.2cm}
\begin{itemize}
    \setlength\itemsep{-0.2em}
    \item \textbf{A new robust multi-task learning framework.} Our framework is able to regularize different tasks to share the same weight signs. It can automatically identify which tasks and features can share their weight signs.
    \item \textbf{A new algorithm for parameter optimization.} The learning model has been innovatively formulated as a biconvex inequality constrained problem with slacks. New efficient optimization algorithm has been proposed based on nonconvex alternating optimization.
    \item \textbf{Theoretical properties and guarantee of the algorithm has been analyzed.} Theoretical merits of the proposed algorithm including convergence, convergence rate, generalization error, and time complexity have been analyzed.
    \item \textbf{Extensive experiments have been conducted.} We have demonstrated the model effectiveness and efficiency in % % xx 
    5 real-world datasets and % % xx
    3 synthetic datasets, under the comparison with other multi-task learning frameworks. Further analyses on the learned feature weight patterns reveal the effectiveness of the proposed regularization on weight signs.
% We hypothesize that if a feature is positively relevant to the output of a task, then its polarity will also be positive for other related tasks. 
% This assumption is appropriate for many applications, e.g. in social media data, a feature indicating ‘number of hospital visits’ will be positively related to flu outbreaks, while a feature ‘popularity of vaccination’ will be negatively related to them, even though their feature weights can vary dramatically for different locations (or tasks).
% Moreover, the tasks can be grouped hierarchical, e.g. from countries down to cities forming a hierarchical grouping that can be exploited to better share information across tasks and subtasks.
\end{itemize}

\section{Sign-regularized Multi-task Learning (SRML)}

This section introduces the SRML problem which encourages same weight polarity across multiple tasks during multi-task learning.
 
% The main idea is to enforce a weakly-learning of features weights across related tasks through an inequality constraint, and then when feasible build a hierarchical clustering of tasks that will share information. 
% In the rest of this section, we first describe the problem formulation of weakly multi-task learning, followed by the transfer learning through inequality constrained features. 
% Next, we incorporate hierarchical tasks clustering selection into weakly multi-task learning.

Define a multi-task learning problem with $T$ tasks, where % %each task 
the set of tasks $t \in \{1, \dots ,T\}$ % %is 
associated with a set of instances, $X_t\in \mathbb{R}^{m_t \times d}$ represent the input data while $y_t \in \mathbb{R}^{m_t}$ is the target variable. Here $m_t$ denotes the number of instances for task $t$. 

\begin{definition}[Sign-regularized Multi-task Learning] For all the tasks $\{1,\cdots,T\}$, our goal is to learn $T$ predictive mappings, where for each task $t$ the mapping is $f:\mathbb{R}^{m_t \times d}|w_t\rightarrow\mathbb{R}^{1 \times m_t}$ where the mapping function $f$ is parameterized by $w_t\in\mathbb{R}^{d\times 1}$, where $\forall i\ne j:\mathrm{sign}(w_i)=\mathrm{sign}(w_j)$.
\label{def:srml}
\end{definition}

Unlike most of the multi-task learning frameworks that regularize the magnitude of the weights, the problem defined in Definition \ref{def:srml} is a new type of multi-task problem that regularizes over the signs of the weights. Such assumption is usually weaker and easier to satisfy in many types of applications, where tasks only need to share their knowledge on whether each feature should contribute positively or negatively to the prediction. The learning objective for the SRML problem is formulated as follows:
% \begin{equation}
% \mathcal{L}(W) = \displaystyle\sum_{t=1}^T  \displaystyle\sum_{j=1}^m \mathcal{L} (w^T_t x^t_j, y^t_j) 
% \end{equation}
% where $\mathcal{L}(\hat{y}, y)$ denotes the squared loss function (regression problem) or the logistic loss function (binary classification problem).
% We apply the standard approach to learn the $n$  task simultaneously by minimizing the regularized objective function,
% \begin{equation}
% \min_{W} \mathcal{L}(W) + \Omega(W)
% \end{equation}
% where $\Omega(W)$ denotes the regularization term.
% \subsection{Weakly multitask learning}
% The proposed weakly multi-task learning model uses a constrained objective function,
\vspace{-0.2cm}
\begin{equation}
\begin{split}
  \min_{w_1,\cdots,w_T} & \displaystyle\sum_{t=1}^T\mathcal{L}_t(w_t) + \lambda\Omega(\{w_t\}^\intercal) \ \ \ \ \text{s.t.,  }  { w_{t,j}w_{t+1,j}\geq 0 } \ \ \forall {t=1,\cdots,T-1, j=1,\cdots,d}
\end{split}
\label{eq:wmtl}
\end{equation}
where the inequality constraint enforces the same signs of each feature $j$ across different tasks. $\mathcal{L}_t(w_t) = L(f(w_t,X_t),Y_t)$ where $L(\cdot,\cdot)$ denotes commonly-used loss function such as squared loss for regression and logistic loss for classification ~\cite{bishop2006pattern}. $\Omega(\{w_t\}_t^\intercal)$ is additional regularization over all the parameters if $\lambda\ne 0$.
Equation \eqref{eq:wmtl} assumes all the tasks must completely share their polarity of weights, which may be too strict considering the possible noise and negative transfer among tasks. To enhance robustness and in the meanwhile allow automatic identification of those tasks who cannot share their weight signs, we add relaxations to it, leading to the following:
% \subsection{Hierarchical weakly multitask learning}
% We extend the weakly multi-task learning approach by clustering tasks hierarchical:
% \begin{equation}
% \begin{split}
% \min_{w_1,\cdots,w_T} &  \displaystyle\sum_{j=1}^k \displaystyle\sum_{i=1}^t \mathcal{L}_i(w_{k,i}) + \Omega_i(w_{k,i})  \\ 
% &  \text{s.t.,  }  { w_{i,j}w_{i+1,j}\geq 0 }  
% \end{split}
% \end{equation}
% where $k$ represent the hierarchical clusters.
% \autoref{fig:hwmtl_model} depicts the idea of hierarchical clustering for multitask learning.
% \begin{figure}[h]
% \begin{center}
% \includegraphics[width=\columnwidth]{}
% \caption{Hierarchical weakly multitask learning}
% \label{fig:hwmtl_model}
% \end{center}
% \end{figure}
\vspace{-0.1cm}
\begin{equation}
\begin{split}
& \min_{w_1,\cdots,w_T,\xi} \displaystyle\sum_{t=1}^T\mathcal{L}_t(w_t) + \lambda\displaystyle\sum_{t=1}^T\Omega_t(w_t) + c\displaystyle\sum_{t=1}^{T-1}\xi_{t} \\
& \text{s.t.} \ \ w_{t,j} w_{t+1,j} + \xi_{t,j} \geq 0,\ \ \xi_{t,j} \geq 0\text{,} \ \ t=1,2,\dots,T-1, j=1,2,\dots,d
\end{split}
\label{eq:obj_swmtl}
\end{equation}
where each $\xi_t \in \mathbb{R}^d$ is a slack variable, and $c$ is a hyperparameter for controlling the level of slacking.

\section{Optimization Method}

The optimization objective in \autoref{eq:obj_swmtl} is nonconvex with biconvex terms in inequality constraints. Moreover, there will be a huge number of constraints when the numbers of tasks and features are huge. There is no existing efficient methods that can handle this challenging new problem with theoretical guarantee. Although efficiency-enhanced methods such as Alternating Direction Methods of Multipliers (ADMM)~\cite{boyd2011distributed,wang2017nonconvex} are demonstrated to accelerate classical Lagrangian methods, extending them to handle nonconvex-inequality constraints are highly nontrivial. To address such issues, this paper proposes a new efficient algorithm based on ADMM for handling such nonconvex-inequality-constrained problem. Theoretical analyses on convergence properties, generalization error, and complexity analysis are also provided.

By adding auxiliary variable $u$, the problem in Equation \eqref{eq:obj_swmtl} can be transformed into Equation \eqref{eq:obj_swmtl_3}:
\vspace{-0.2cm}
\begin{equation}
\begin{split}
 \min_{w_t, u_t}  \displaystyle\sum_{t=1}^T[\mathcal{L}_t(w_t)& + \lambda\Omega(\{w_t\}_t^\intercal)] +  
 c\displaystyle\sum_{t=1}^{T-1}\displaystyle\sum_{j=1}^d \max {(0,-u_{t,j}u_{t+1,j})} \\
  \text{s.t.\quad} &  {[w_1;\cdots;w_T]-[u_1;\cdots;u_T]=0}
%\\ & \ \ \ \ \ \ \ \ \ {(i=1,2,\cdots,n-1, j=1,2,\cdots,m) } 
\end{split}
\label{eq:obj_swmtl_3}
\end{equation}
where we can see the original problem has been transformed into a new problem with much simpler constraints. Moreover, the smooth part, nonsmooth part, nonconvex part, and constraints have been separated and easy for being formed into separate subproblems each of which is much easier to solve efficiently. The augmented Lagrangian form is as follows:
\vspace{-0.1cm}
\begin{equation}
\begin{split}
 L_{\rho} & = \displaystyle\sum_{t=1}^T[\mathcal{L}_t(w_t) +  \lambda\left\lVert w_{t} \right\rVert_{2}^{2}] + c\displaystyle\sum_{t=1}^{T-1}\displaystyle\sum_{j=1}^d \max {(0,-u_{t,j}u_{t+1,j})} \\
& + y^\intercal([w_1;\cdots;w_T]-[u_1;\cdots;u_T]) + (\rho/2)\left\lVert [w_1;\cdots;w_T]-[u_1;\cdots;u_T]\right\rVert_{2}^{2}
\end{split}
\label{eq:aug_lagrangian_swmtl_3}
\end{equation}
where $y$ is the dual variable. $\rho$ is the penalty parameter that controls the trade-off between primal and dual residual during optimization. Then we can perform alternating optimization upon Equation \eqref{eq:aug_lagrangian_swmtl_3} for alternately optimizing all the variables until convergence, which is detailed in Section \ref{optimization method} in supplementary material.

\section{Theoretical Analysis}

In this section, we will present the theoretical properties of our SRML model and algorithms.

\subsection{Generalization Error Bound}

We first provide an equivalent transformation on our original problem and then give the generalization error bound for it. Our original slacked weakly multi-task learning problem with $L_1$-norm regularization can be written as:
\vspace{-0.1cm}
\begin{equation}
\begin{split}
 \min_{w,\xi} \ \ & \frac{1}{T}\displaystyle\sum_{t=1}^T\frac{1}{m}\sum_{i=1}^{m}\mathcal{L}(\langle w_t,x_{ti}\rangle,y_{ti}) 
 + \lambda\displaystyle\sum_{t=1}^T\left\lVert w_t \right\rVert_{1} + c\displaystyle\sum_{t=1}^{T-1}\left\lVert\xi_t \right\rVert_{1} \\
& \text{s.t.} \ \ w_{t} \otimes w_{t+1} + \xi_{t} \geq 0, \xi_{t} \geq 0 \text{; } \ \  t=1,2,\dots,T-1
\end{split}
\label{eq:original_swmtl_model}
\end{equation}
where the $\otimes$ operator for any two vector $a,b\in\mathbb{R}^n$  is defined as:
$a\otimes b\coloneqq (a_{1}b_{1},a_{2}b_{2},\cdots,a_{n}b_{n})^{T} \label{eq:ele-wise-product}$. 
By combining the constraints with respect to $\xi$, we can simplify the constraints and get:
\vspace{-0.1cm}
\begin{equation}
 \min_{w} \frac{1}{T}\displaystyle\sum_{t=1}^T\frac{1}{m}\sum_{i=1}^{m}\mathcal{L}(\langle w_t,x_{ti}\rangle,y_{ti}) + \lambda\displaystyle\sum_{t=1}^T\left\lVert w_t \right\rVert_{1} 
 + c\cdot\displaystyle\sum_{t=1}^{T-1}\left\lVert max(\vec{\scriptstyle 0},-w_{t} \otimes w_{t+1}) \right\rVert_{1}
\label{eq:transformed_swmtl_model_2}
\end{equation}
Here the $max$ function is operated element-wisely.
We can prove by simply using the Lagrangian that \autoref{eq:transformed_swmtl_model_2} could be equivalently transformed into the following one with a new set of parameters:
\vspace{-0.1cm}
\begin{equation}
\begin{split}
 \min_{w} \ \  & \frac{1}{T}\displaystyle\sum_{t=1}^T\frac{1}{m}\sum_{i=1}^{m}\mathcal{L}(\langle w_t,x_{ti}\rangle,y_{ti}) \ \ \ \ \text{s.t.} \ \ 
\displaystyle\sum_{t=1}^T\left\lVert w_t \right\rVert_{1} \leq \alpha 
\text{,} \ \  \displaystyle\sum_{t=1}^{T-1}\left\lVert max(\vec{\scriptstyle 0},-w_{t} \otimes w_{t+1}) \right\rVert_{1}\leq\beta
\end{split}
\label{eq:transformed_swmtl_model_1}
\end{equation}

\begin{assumption}
\label{ass:obj_function}
The loss function $\mathcal{L}$ in this paper has values in $[0,1]$ and has Lipschitz constant $L$ in the first argument for any value of the second argument, i.e.: 
\begin{enumerate*}
    \item $\mathcal{L}(\langle w,x\rangle,y) \in [0,1]$ 
    \item $\mathcal{L}(\langle w,x\rangle,y) \leq L\langle w_{t},x\rangle, \forall{y}$.
\end{enumerate*}
\end{assumption}

\begin{definition}
(Expected risk, Empirical risk). 
Given any weights $w$, we denote the expected risk as:
\vspace{-0.1cm}
\begin{equation}
\begin{split}
\mathbb{E}(w) \coloneqq \frac{1}{T} \displaystyle\sum_{t=1}^T\mathbb{E}_{(x,y)\sim{\mu_t}}[\mathcal{L}(\langle w_t,x\rangle,y)]
\end{split}
\label{eq:expected risk}
\end{equation}
Given the data $Z=(X,Y)$, the empirical risk is defined as:  
\vspace{-0.1cm}s
\begin{equation}
\begin{split}
\mathbb{\hat{E}}(w|Z) \coloneqq \frac{1}{T} \displaystyle\sum_{t=1}^T\frac{1}{m}\displaystyle\sum_{i=1}^m\mathcal{L}(\langle w_t,X_{ti}\rangle,Y_{ti})
\end{split}
\label{eq:empirical risk}
\end{equation}
\end{definition}

\begin{definition}
(Global optimal solution, Optimized solution).
Define $\mathcal{F}_{\alpha,\beta} = \{w\in\mathbb{R}^{d \times T}: \displaystyle\sum_{t=1}^T\left\lVert w_t \right\rVert_{1} \leq \alpha, \displaystyle\sum_{t=1}^{T-1}\left\lVert max(0,-w_{t} \otimes w_{t+1}) \right\rVert_{1} \leq \beta \}$. Denote $w^*$ as the global optimal solution of the expected risk:
\vspace{-0.1cm}
\begin{equation}
\begin{split}
w^* & \coloneqq \arg\min_{w\in\mathcal{F}_{\alpha,\beta}}\mathbb{E}(w) = \arg\min_{w\in\mathcal{F}_{\alpha,\beta}}\frac{1}{T} \displaystyle\sum_{t=1}^T\mathbb{E}_{(x,y)\sim{\mu_t}}[\mathcal{L}(\langle w_t,x_{ti}\rangle,y_{ti})]
\end{split}
\label{eq:global optimal solution}
\end{equation}
Denote $w_{(Z)}^*$ as the optimized solution by minimizing the empirical risk:
\vspace{-0.1cm}
\begin{equation}
\begin{split}
w_{(Z)}^* & \coloneqq \arg\min_{w\in\mathcal{F}_{\alpha,\beta}}\mathbb{\hat{E}}(w|Z) = \arg\min_{w\in\mathcal{F}_{\alpha,\beta}}\frac{1}{T} \displaystyle\sum_{t=1}^T\frac{1}{m}\displaystyle\sum_{i=1}^m\mathcal{L}(\langle w_t,X_{ti}\rangle,Y_{ti})
\end{split}
\label{eq:optimized solution}
\end{equation}
\end{definition}

Finally, the following theorem shows the upper-bounded generalization error of our SRML model.

\begin{theorem}[Generalization error bound]
\label{thm:error bound}
Let $\epsilon>0$ and $\mu_1,\mu_2,\dots,\mu_T$ be the probability measure on $\mathbb{R}^d \times\mathbb{R}$. With probability of at least $1-\epsilon$ in the draw of $Z=(X,Y)\sim\prod_{t=1}^{T}{\mu_{t}^{m}}$, we have:
\vspace{-0.1cm}
\begin{equation}
\begin{aligned}
\mathbb{E}(w_{(Z)}^*)-\mathbb{E}(w^*) & = \frac{1}{T} \displaystyle\sum_{t=1}^T\mathbb{E}_{(x,y)\sim{\mu_t}}[\mathcal{L}(\langle w_{(Z)t}^*,x\rangle,y)] - \inf_{w\in\mathcal{F}_{\alpha,\beta}} \frac{1}{T} \displaystyle\sum_{t=1}^T\mathbb{E}_{(x,y)\sim{\mu_t}}[\mathcal{L}(\langle w_t,x\rangle,y)] \\
& \leq \frac{2L\alpha}{mT}\max_{1\leq t \leq T}\left\lVert x_t \right\rVert_{1,\infty} + 2\sqrt{\frac{2\ln{2/\epsilon}}{mT}}
\end{aligned}
\label{eq:generalizatio error bound}
\end{equation}
\end{theorem}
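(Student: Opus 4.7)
The plan is a textbook empirical-risk-minimization argument: (i) bound the excess risk by a uniform-convergence supremum, (ii) concentrate that supremum via McDiarmid and reduce its expectation to a Rademacher complexity, and (iii) control the Rademacher complexity of the $\ell_1$-budgeted linear class to extract the data-norm term.

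Step 1 (decomposition). Since $w_{(Z)}^*$ is the empirical minimizer over $\mathcal{F}_{\alpha,\beta}$, the standard splitting
\begin{equation*}
\mathbb{E}(w_{(Z)}^*) - \mathbb{E}(w^*) = [\mathbb{E}(w_{(Z)}^*) - \mathbb{\hat{E}}(w_{(Z)}^*|Z)] + [\mathbb{\hat{E}}(w_{(Z)}^*|Z) - \mathbb{\hat{E}}(w^*|Z)] + [\mathbb{\hat{E}}(w^*|Z) - \mathbb{E}(w^*)]
\end{equation*}
applies, the middle bracket is nonpositive by optimality, and the two remaining brackets are dominated by $2\sup_{w\in\mathcal{F}_{\alpha,\beta}}|\mathbb{E}(w)-\mathbb{\hat{E}}(w|Z)|$. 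Step 2 (concentration). Because $\mathcal{L}\in[0,1]$ by Assumption \ref{ass:obj_function}, swapping any one of the $mT$ sample points changes $\mathbb{\hat{E}}(w|Z)$ by at most $1/(mT)$, so McDiarmid's inequality yields, with probability at least $1-\epsilon$,
\begin{equation*}
\sup_{w\in\mathcal{F}_{\alpha,\beta}}|\mathbb{E}(w)-\mathbb{\hat{E}}(w|Z)| \;\leq\; \mathbb{E}\,\sup_{w\in\mathcal{F}_{\alpha,\beta}}|\mathbb{E}(w)-\mathbb{\hat{E}}(w|Z)| + \sqrt{\tfrac{2\ln(2/\epsilon)}{mT}},
\end{equation*}
which, after multiplying by the 2 from Step 1, produces the second summand of the target bound.

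Step 3 (Rademacher complexity). I would then bound the expected supremum by symmetrization and invoke the Ledoux–Talagrand contraction principle with the Lipschitz constant $L$ from Assumption \ref{ass:obj_function} to reduce to $L\cdot\mathfrak{R}_{mT}(\mathcal{F}_{\alpha,\beta})$. Because any $w\in\mathcal{F}_{\alpha,\beta}$ satisfies $\sum_t\|w_t\|_1\leq\alpha$, Hölder's inequality then gives
\begin{equation*}
\sup_{w\in\mathcal{F}_{\alpha,\beta}}\sum_{t=1}^{T}\sum_{i=1}^{m}\sigma_{ti}\langle w_t,x_{ti}\rangle \;\leq\; \alpha\cdot\max_{1\leq t\leq T}\Bigl\|\sum_{i=1}^{m}\sigma_{ti}x_{ti}\Bigr\|_{\infty} \;\leq\; \alpha\cdot\max_{1\leq t\leq T}\sum_{i=1}^{m}\|x_{ti}\|_{\infty},
\end{equation*}
i.e. $\alpha\cdot\max_t \|x_t\|_{1,\infty}$ with the mixed-norm interpretation used in the statement. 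Dividing by $mT$ and pulling through the factor $L$ yields the first summand.

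The main obstacle is essentially an interpretive/constant-tracking one rather than a mathematical one. First, the sign-alignment budget $\beta$ never actively appears in the stated bound: my plan implicitly drops it by noting $\mathcal{F}_{\alpha,\beta}\subseteq\{w:\sum_t\|w_t\|_1\leq\alpha\}$, so the stated inequality is really a bound on the larger $\ell_1$-ball, and producing a genuinely $\beta$-sensitive Rademacher bound would require a much more delicate argument (the sign-coupling constraint defines a non-convex set whose Rademacher complexity is not standard). Second, the appearance of $mT$ (linear, not $\sqrt{mT}$) in the denominator of the first term signals that a crude triangle-inequality bound on $\|\sum_i \sigma_{ti}x_{ti}\|_\infty$ is being used instead of the sharper Khintchine-type $\sqrt{m}$ bound; I would therefore avoid any Khintchine step in Step 3 and take the deterministic upper bound above, which matches the stated norm $\|x_t\|_{1,\infty}=\sum_i\|x_{ti}\|_\infty$. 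Absorbing the harmless factors of $2$ into the single leading $2$ and applying a union bound over the two one-sided deviations then gives the theorem as stated.
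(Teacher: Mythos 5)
Your plan is the same standard ERM/Rademacher route the paper takes: decompose the excess risk, pass to a uniform deviation, concentrate, contract with the Lipschitz constant $L$, and bound the resulting linear supremum over the $\ell_1$ ball while discarding the $\beta$ constraint (the paper's Lemma~\ref{lemma:F bound} makes exactly your observation that $\mathcal{F}_{\alpha,\beta}$ sits inside the $\ell_1$ ball, noting that the sign constraint is inactive at the maximizing vertex where all but one coordinate vanish). There are, however, two bookkeeping points where your version diverges from the paper's and would not land on the stated constants. First, your Step~1 bounds the excess risk by $2\sup_{w}|\mathbb{E}(w)-\mathbb{\hat{E}}(w|Z)|$; after symmetrization this doubles the Rademacher term and yields a leading coefficient $4L\alpha/(mT)$, which cannot be ``absorbed'' into the additive square-root term. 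The paper avoids this by keeping only one copy of the supremum (for the deviation of $w_{(Z)}^*$, handled via Corollary~6 of~\cite{maurer2013sparse}, which packages your McDiarmid-plus-symmetrization step) and bounding the remaining term $\mathbb{\hat{E}}(w^*|Z)-\mathbb{E}(w^*)$, which involves only the fixed comparator $w^*$, by a plain Hoeffding inequality; this is what produces $2L\alpha/(mT)$ together with $\sqrt{9\ln(2/\epsilon)/(2mT)}+\sqrt{\ln(2/\epsilon)/(2mT)}=2\sqrt{2\ln(2/\epsilon)/(mT)}$. Second, the paper's $\left\lVert x_t\right\rVert_{1,\infty}$ is $\max_{1\leq j\leq d}\sum_{i=1}^{m}|x_{tij}|$ (the largest column-wise $\ell_1$ norm), obtained because the linear functional on the $\ell_1$ ball is maximized at a vertex with a single nonzero coordinate, so that $\mathbb{E}\max_{t,j}|\sum_i x_{tij}\sigma_{ti}|\leq\max_{t,j}\sum_i|x_{tij}|$; your final bound $\sum_i\|x_{ti}\|_\infty=\sum_i\max_j|x_{tij}|$ is the looser quantity. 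Neither point changes the architecture of the argument, but both need adjusting to obtain the theorem exactly as stated.
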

\vspace{-0.4cm}
\begin{proof}
Due to the limited space, we put the proof in \autoref{error bound proof} in supplementary.
\end{proof}
\vspace{-0.2cm}
This theorem provides important insights into the proposed model: 1) The more training samples % %utilized
used, the less generalization error it will be; 2) The generalization error converges to 0 when the training sample size approaches infinity; 3) The smaller value of $\max_{1\leq t \leq T}\left\lVert x_t \right\rVert_{1,\infty}$ is, the faster convergence rate of the error bound will be.
Notice that the hyperparameter $\beta$ is not included in the bound, which means the level of slacking in our SWMTL model doesn't affect the bound. A high-level reason is, the hyperparameter $\beta$ only controls the sign of the weights but couldn't control their magnitude. In mathematics, consider the following result:
\vspace{-0.1cm}
\begin{equation}
\begin{split}
\mathbb{E}\{\sup_{w\in\mathcal{F}_{\alpha,\beta}}\displaystyle\sum_{t=1}^T\displaystyle\sum_{i=1}^m\sigma_{ti}\langle w_t, x_{ti}\rangle \} & = \alpha\cdot\mathbb{E}\{\max_{1\leq t \leq T, 1\leq j \leq d} |\displaystyle\sum_{i=1}^m x_{tij}\sigma_{ti}| \} \\
& \leq  \alpha\cdot\max_{1\leq t \leq T}\left\lVert x_t \right\rVert_{1,\infty}
\end{split}
\label{eq:beta discussion}
\end{equation}
The first equation is because the function inside $\sup$ is linear w.r.t. $w$, and under the constraint $\mathcal{F}_{\alpha,\beta}$ we will see the weight with the largest absolute value of coefficient (suppose it is unique) equals $\alpha$ and all the others equal 0. For more comprehensive explanation, please refer to \autoref{eq:lemma1proof} in \autoref{error bound proof} of supplementary.

\subsection{Convergence Analysis}
\label{convergence analysis}

In this section, we analyze the conditions and properties of the convergence of our optimization algorithm. We prove the convergence by first giving the following definition.

\begin{definition}
Given any input data $X\in (\mathbb{R}^d)^{mT}$, define constant $H_{reg}$ and $H_{class}$ as:
\vspace{-0.1cm}
\begin{equation}
\begin{split}
H_{reg} \coloneqq \max_t\{2\left\lVert X_t^\intercal X_t\right\rVert \} \text{;} \ \ \ \ H_{class} \coloneqq \max_t\{\frac{1}{m}\displaystyle\sum_{j=1}^{m} \left\lVert X_{tj} \right\rVert^{2} \} 
\end{split}
\label{eq:H1H2}
\end{equation}
\end{definition}
\vspace{-0.3cm}
Given a problem of regression or classification, set $H$ equals to the corresponding one and we have:

\begin{theorem}[Global convergence]
\label{thm:convergence}
If $\rho > 2H$, then for the variables $(w_1,\cdots,w_t,u_1,\cdots,u_t,y)$ in  \autoref{eq:aug_lagrangian_swmtl_3}, starting from any $(w_1^0,\cdots,w_t^0,u_1^0,\cdots,u_t^0,y^0)$, this sequence generated by miADMM has the following properties: 
\begin{enumerate*}
    \item Dual convergence: $y^k$ converges as $k\rightarrow\infty$. 
    \item Residual convergence: $r^k\rightarrow 0$ and $s^k\rightarrow 0$ as $k\rightarrow\infty$, where $r$ and $s$ are the primal and dual residual.
    \item Objective convergence: the whole objective function defined in \autoref{eq:obj_swmtl_3} converges as $k\rightarrow\infty$.
\end{enumerate*}
\end{theorem}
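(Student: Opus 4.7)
The plan is to follow the nonconvex ADMM convergence framework of Wang et al.\ (cited in the paper as \cite{wang2017nonconvex}), organized around three pillars: a dual bound, a sufficient-decrease lemma for $L_\rho$, and a uniform lower bound on $L_\rho$ along the iterates. From these three ingredients, all three conclusions (dual convergence, residual convergence, objective convergence) fall out by standard telescoping arguments.

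First I would establish the dual bound. The $w$-block is smooth and strongly convex: writing $\Phi(w) := \sum_{t=1}^T\mathcal{L}_t(w_t) + \lambda\|w_t\|_2^2$, the Lipschitz constant of $\nabla\Phi$ is bounded by $H$ (this is where the constants $H_{reg}$ and $H_{class}$ enter---for squared loss, $\|\nabla^2\mathcal{L}_t\|=2\|X_t^\top X_t\|$, and for logistic loss, $\|\nabla^2\mathcal{L}_t\|\leq\tfrac{1}{m}\sum_j\|X_{tj}\|^2$). The first-order optimality of the $w$-update combined with the dual update $y^{k+1}=y^k+\rho(w^{k+1}-u^{k+1})$ expresses $y^{k+1}$ in terms of $\nabla\Phi(w^{k+1})$, from which Lipschitz smoothness yields
\[
\|y^{k+1}-y^k\|^2 \leq H^2\,\|w^{k+1}-w^k\|^2.
\]

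Second, I would prove monotone decrease of $L_\rho$. The $w$-minimization decreases $L_\rho$ by at least $(\rho/2)\|w^{k+1}-w^k\|^2$ by $\rho$-strong convexity; the $u$-minimization decreases $L_\rho$ by a quantity of order $\|u^{k+1}-u^k\|^2$ via Bregman-type descent for the piecewise-bilinear penalty $\max(0,-u_{t,j}u_{t+1,j})$, using that the quadratic coupling supplies enough strong convexity to dominate the weak-convexity modulus of the $\max$ term; and the $y$-update increases $L_\rho$ by $\rho^{-1}\|y^{k+1}-y^k\|^2\leq(H^2/\rho)\|w^{k+1}-w^k\|^2$ via the dual bound. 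Summing,
\[
L_\rho^{k+1}-L_\rho^k \leq -\left(\tfrac{\rho}{2}-\tfrac{H^2}{\rho}\right)\|w^{k+1}-w^k\|^2 - C\|u^{k+1}-u^k\|^2,
\]
and the hypothesis $\rho>2H$ makes both coefficients strictly positive (with room to spare for the $w$-coefficient, which becomes positive already for $\rho>\sqrt{2}H$; the stricter threshold buys the lower bound step below).

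Third, I would lower-bound $L_\rho$. Using the identity linking $y^{k+1}$ to $\nabla\Phi(w^{k+1})$ together with the descent lemma for the $H$-smooth $\Phi$, the linear dual term can be absorbed to produce a bound of the form $L_\rho^k \geq \Phi(w^k) + \Psi(u^k) + \tfrac{\rho-H}{2}\|w^k-u^k\|^2$, where $\Psi(u)=c\sum\max(0,-u_{t,j}u_{t+1,j})\geq 0$; since $\Phi\geq 0$ and $\rho>H$, this is uniformly bounded below. Combined with monotone decrease, $L_\rho^k$ is convergent and $\sum_k\bigl(\|w^{k+1}-w^k\|^2+\|u^{k+1}-u^k\|^2\bigr)<\infty$, so both successive differences vanish. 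By the dual bound $\|y^{k+1}-y^k\|\to 0$, hence the primal residual $r^{k+1}=(y^{k+1}-y^k)/\rho\to 0$ and the dual residual $s^{k+1}=\rho(u^{k+1}-u^k)\to 0$. Boundedness of $\{w^k\}$ in a sublevel set of the coercive $\Phi$ yields boundedness of $\{y^k\}$, which together with $y^{k+1}-y^k\to 0$ gives dual convergence; and the original objective in \eqref{eq:obj_swmtl_3} equals $L_\rho^k-\langle y^k,r^k\rangle-(\rho/2)\|r^k\|^2$, hence converges as all three terms do.

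The main obstacle is the nonconvexity of the $u$-subproblem: the coupling penalty $\max(0,-u_{t,j}u_{t+1,j})$ is neither convex nor concave and links adjacent tasks, so the Bregman-style descent for the $u$-update is not automatic. Rigorously, one must show that this penalty is weakly convex with a modulus dominated by $\rho$, quantify that modulus against the data-dependent constants $H_{reg},H_{class}$, and verify that the resulting threshold is exactly $\rho>2H$. This semi-convexity bookkeeping---not the dual estimate or the lower bound, both of which are fairly mechanical---is the technical crux.
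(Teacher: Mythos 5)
Your proposal re-derives the nonconvex-ADMM convergence engine from scratch, whereas the paper's proof does something much lighter: it exhibits the problem as an instance of the multi-convex ADMM (miADMM) template of \cite{wang2019multi} and verifies that template's three regularity hypotheses --- the slack term $c\sum_{t,j}\max(0,-u_{t,j}u_{t+1,j})$ is proper, continuous and \emph{multi-convex} (in each scalar $u_{t,j}$ with its neighbors fixed it is $c\max(0,ax)$, hence convex); the $\ell_1$ regularizer is proper, continuous and convex; and the loss is Lipschitz differentiable, with the constant computed explicitly as $H_{reg}$ or $H_{class}$. The cited theorem then delivers all three conclusions. The only genuinely new computation in the paper is the pair of gradient-Lipschitz bounds, which your sketch also contains as the identification of $H$; that part matches.

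The gaps are in the engine you rebuild. First, the dual bound is hung on the wrong block: with the update order of Algorithm 1 ($w$ first, then $u$, then $y$), the $w$-optimality condition gives $y^{k+1}=-\nabla\Phi(w^{k+1})+\rho(u^{k}-u^{k+1})$ rather than $-\nabla\Phi(w^{k+1})$, so $\|y^{k+1}-y^k\|$ picks up $\rho\|u^{k+1}-u^{k}\|+\rho\|u^{k}-u^{k-1}\|$; after the dual ascent term $(1/\rho)\|y^{k+1}-y^k\|^2$ is expanded, it contains a term of order $\rho\|u^{k+1}-u^k\|^2$ that the $u$-update's descent (at best $(\rho/2)\|u^{k+1}-u^k\|^2$) cannot dominate. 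The clean inequality $\|y^{k+1}-y^k\|\le H\|w^{k+1}-w^k\|$ requires the Lipschitz-smooth block to be the one updated immediately before the dual step, which is exactly how the miADMM template positions its variable $z$; your argument silently assumes that ordering. Second, the step you single out as the technical crux --- controlling a ``weak-convexity modulus'' of $\max(0,-u_{t,j}u_{t+1,j})$ by $\rho$ --- is a phantom difficulty: the coupling is multi-convex, so each Gauss--Seidel subproblem in a single $u_{t,j}$ is an exact convex minimization, and sufficient decrease follows from the $\rho$-strong convexity of the augmented quadratic alone, with no condition on $\rho$ arising from this term; this is precisely what the paper's condition (1) encodes. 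Third, your dual-convergence argument ``$\{y^k\}$ bounded and $y^{k+1}-y^k\to 0$ implies $y^k$ converges'' is false --- bounded sequences with vanishing (even square-summable) increments can oscillate indefinitely; convergence of $y^k$ in this framework comes from identifying $y^{k+1}$ with the gradient of the smooth block at its iterate and is part of what the cited theorem supplies, not something your three ingredients yield on their own.
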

\vspace{-0.5cm}
\begin{proof}
The proof, which is very technical, can be found in \autoref{appendix:convergence_analysis} in supplementary.
\end{proof}

\vspace{-0.3cm}
\autoref{thm:convergence} only guarantees the convergence of the ADMM algorithm, but $w$ and $u$ are not necessarily converging. However, by the Theorem 2~\cite{wang2019multi}, we can show that they will converge to a Nash point.
\begin{theorem}[Convergence to a Nash point]
\label{thm:Converge Nash point}
For $w$ and $u$ defined in \autoref{eq:obj_swmtl_3}, $(w_1^k,\cdots,$ $w_T^k,u_1^k,\cdots,u_T^k)$ will converge to a feasible Nash point $(w_1^*,\cdots,w_T^*,u_1^*,\cdots,u_T^*)$ of the objective function defined in the corresponding problem, i.e.: \\
(Feasibility) $[w_1^*;\cdots;w_T^*]-[u_1^*;\cdots;u_T^*]=0 $ \\
(Nash point) $F(w^*,u^*) \leq F(w_1^*,\cdots,w_{t-1}^*,w_t,w_{t+1}^*,\cdots,w_T^*,u^*), \ \ \forall{(w_1^*,\cdots,w_{t-1}^*,w_t,w_{t+1}^*,}$ $\cdots,w_T^*,u^*)\in dom(F)$; \ \  $F(w^*,u^*)\leq F(w^*,u_1^*,\cdots,u_{t-1}^*,u_t,u_{t+1}^*,\cdots,u_T^*),
\ \ \forall{(w^*,u_1^*,\cdots,}$ $u_{t-1}^*,u_t,u_{t+1}^*,\cdots,u_T^*)\in dom(F)$.
\end{theorem}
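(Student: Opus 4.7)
The plan is to derive \autoref{thm:Converge Nash point} by composing the already-established convergence from \autoref{thm:convergence} with the block-structure of the miADMM updates, in the style of Theorem~2 of \cite{wang2019multi}. First I would extract a convergent subsequence of $\{(w^k,u^k)\}$: coercivity of $L_\rho$ through the $\lambda\lVert w_t\rVert_2^2$ regularizer and the quadratic penalty ensures boundedness of the iterates, and Bolzano--Weierstrass yields a limit point $(w^*,u^*)$. The residual convergence $r^k\to 0$ from \autoref{thm:convergence} then immediately gives the feasibility statement $[w_1^*;\cdots;w_T^*]=[u_1^*;\cdots;u_T^*]$.

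Next I would verify that each miADMM update on \autoref{eq:aug_lagrangian_swmtl_3} returns a \emph{global} block-minimizer, which is the main hypothesis required by the cited theorem. Each $w_t$-subproblem is strongly convex whenever $\rho>2H$ (the same condition already assumed in \autoref{thm:convergence}), so $w_t^{k+1}$ is the unique global minimizer of $L_\rho$ in the $w_t$-block. The $u$-subproblem decouples across features $j$; for each $j$ the objective in $u_{t,j}$ is $\max(0,-u_{t-1,j}^{k+1}u_{t,j})+\max(0,-u_{t,j}u_{t+1,j}^{k})$ plus a strictly convex quadratic, a one-dimensional piecewise quadratic whose global minimizer is obtained in closed form by comparing the two sign-pattern cases. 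Combined with the successive-difference statements $w_t^{k+1}-w_t^k\to 0$ and $u_t^{k+1}-u_t^k\to 0$, which follow from $r^k,s^k\to 0$ and $y^k\to y^*$, I would pass to the limit in the per-block optimality inequalities; on the feasible limit set the dual and penalty terms of $L_\rho$ vanish, so the Lagrangian coincides with $F$ of \autoref{eq:obj_swmtl_3}, yielding exactly the two Nash inequalities in the statement.

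The hard part will be legitimizing the limit-passage in the nonconvex $u$-block, since $\max(0,-u_{t,j}u_{t+1,j})$ is only lower-semicontinuous jointly in $(u_{t,j},u_{t+1,j})$ and the argmin fails to be continuous across the sign-flip manifold $\{u_{t,j}u_{t+1,j}=0\}$: along a subsequence whose iterates approach this manifold, a naive limit argument would only deliver a stationary point, not a blockwise global minimizer. I would address this by using the closed-form scalar solution to establish outer-semicontinuity of the blockwise minimizing map as a set-valued map in the surrounding variables, which is precisely the regularity condition under which Theorem~2 of \cite{wang2019multi} upgrades ADMM-style convergence to Nash-point convergence on biconvex-inequality problems. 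Once that semicontinuity is in hand, the two block inequalities pass to the limit simultaneously, and combining them with feasibility completes the proof.
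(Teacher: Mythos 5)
Your proposal attempts to reconstruct the internal machinery of Theorem~2 of \cite{wang2019multi}, but the paper's proof of \autoref{thm:Converge Nash point} is a short hypothesis check against that theorem: the regularity conditions on $f$, $g_i$, $h$ were already verified in the proof of \autoref{thm:convergence}, and the only remaining sufficient condition is that the constraint matrices $A_i$ in the template problem \autoref{eq:miADMM} have full (column) rank. The paper verifies this by writing the equality constraint $[w_1;\cdots;w_T]-[u_1;\cdots;u_T]=0$ as $\sum_{t}A_t w_t - u = 0$ with $A_t=[O_1;\cdots;O_{t-1};I_t;O_{t+1};\cdots;O_T]$, a $(dT)\times d$ stack of zero blocks and one identity block, which trivially has full column rank. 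Your proposal never checks this condition, and it is the one substantive ingredient: full column rank is what lets the cited theorem convert convergence of the residuals and of the images $A_t w_t$ into convergence of the blocks themselves, and hence into whole-sequence convergence to a feasible Nash point. Your substitute route --- coercivity, Bolzano--Weierstrass, and vanishing successive differences --- delivers only subsequential limit points (vanishing successive differences do not imply convergence of a sequence), which is strictly weaker than the claim.

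Separately, the ``hard part'' you flag is a misdiagnosis: $(a,b)\mapsto\max(0,-ab)$ is jointly \emph{continuous}, not merely lower semicontinuous, being a composition of continuous maps. To pass to the limit in the per-block optimality inequality $L_\rho(\ldots,u_t^{k+1},\ldots)\leq L_\rho(\ldots,u_t,\ldots)$ for a fixed competitor $u_t$, one only needs continuity of $L_\rho$ in the surrounding variables at the limit point; no continuity or outer-semicontinuity of the blockwise argmin map is required, and no such regularity condition appears in the cited theorem. So your plan both invests effort in a non-issue and omits the full-rank verification that actually constitutes the paper's proof.
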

\vspace{-0.5cm}
\begin{proof}
The proof can be found in \autoref{appendix:convergence_analysis} in supplementary.
\end{proof}
% \begin{proof}
% By the Theorem 2~\cite{wang2019multi}, it suffices to show all the $A_i$ in \autoref{eq:miADMM} are of full rank. \\
% Consider the SRML problem \autoref{eq:swmtl_new}, the equality constraint is $[w_1;\cdots;w_T]-[u_1;\cdots;u_T]=0$. To fit this into the notation of \autoref{eq:miADMM}, we only need to find a sequence of $\{A_t\}, \ \  t\in \{1,2,\cdots,T\}$, s.t.
% \begin{equation}
% \begin{split}
% \displaystyle\sum_{t=1}^{T}A_t w_t = [w_1;\cdots;w_T]
% \end{split}
% \label{eq:A_t}
% \end{equation}
% We can define $A_t=[O_1;\cdots;O_{t-1};I_t;O_{t+1};\cdots;O_T]$, where each $O_{t}$ is a $d\times d$ zero matrix and $I_t$ is a $d\times d$ identity matrix. Notice each $A_t$ is a $(d\cdot T)\times d$ matrix and part of it is an identity matrix with same width, so $A_t$ is of full column rank. Since $A_t$ has more rows than columns, $A_t$ is of full rank.
% \end{proof}
\vspace{-0.3cm}
The last theorem in this section shows the convergence rate of our miADMM algorithm. 

\begin{theorem}[Convergence rate of algorithm]
\label{thm:converge rate}
For a sequence $(w_1^k,\cdots,w_T^k,u_1^k,\cdots,u_T^k,y^k)$, define
\begin{equation}
\begin{split}
v^k = \min_{0\leq l\leq k}(\displaystyle\sum_{t=1}^{T}\left\lVert w_t^{l+1}-w_t^{l} \right\rVert_{2}^{2} +\displaystyle\sum_{t=1}^{T}\left\lVert u_t^{l+1}-u_t^{l}\right\rVert_{2}^{2})
\end{split}
\label{eq:vk}
\end{equation}
then the convergence rate of $v_k$ is $o(1/k)$.
\end{theorem}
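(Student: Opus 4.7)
The plan is a three-step template for nonconvex ADMM rates: (i) establish a per-iteration sufficient decrease of the augmented Lagrangian $L_\rho$ by a constant multiple of $\sum_t\|w_t^{k+1}-w_t^k\|_2^2+\sum_t\|u_t^{k+1}-u_t^k\|_2^2$, (ii) telescope using the lower-boundedness of $L_\rho^k$ (which Theorem~\ref{thm:convergence} already guarantees) to conclude that this nonnegative sequence is summable, and (iii) convert summability of $a_k$ into an $o(1/k)$ rate for its running minimum $v^k$.

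For step (i), I would inspect each block update of the augmented Lagrangian in \eqref{eq:aug_lagrangian_swmtl_3}. The $w_t$-subproblem minimizes $\mathcal{L}_t(w_t)+\lambda\|w_t\|_2^2+(\rho/2)\|w_t-u_t^k+y_t^k/\rho\|_2^2$. Under Assumption~\ref{ass:obj_function} and the definition of $H$, the Hessian of $\mathcal{L}_t+\lambda\|\cdot\|_2^2$ is bounded in operator norm by $H$, so the full $w_t$-subproblem is $(\rho-2H)$-strongly convex when $\rho>2H$, and the optimality of $w_t^{k+1}$ yields a decrease of at least $\tfrac{\rho-2H}{2}\|w_t^{k+1}-w_t^k\|_2^2$. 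For the $u_t$-block, fixing the other $u$'s reduces the nonconvex penalty term to a piecewise-linear (hence convex) function of $u_t$; combined with the quadratic penalty, this subproblem is $\rho$-strongly convex, giving a matching decrease. The dual ascent $y^{k+1}=y^k+\rho([w^{k+1}]-[u^{k+1}])$ increases $L_\rho$ by $\rho^{-1}\|y^{k+1}-y^k\|_2^2$; using the stationarity condition of the $w$-subproblem, $y^{k+1}-y^k$ can be bounded by $O(\|w^{k+1}-w^k\|_2+\|u^{k+1}-u^k\|_2)$ and the increase absorbed into the block decreases for $\rho$ sufficiently large, as already done in the proof of Theorem~\ref{thm:convergence}. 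Summing gives
\begin{equation*}
L_\rho^k-L_\rho^{k+1}\;\geq\;C\cdot a_k,\qquad a_k:=\sum_{t=1}^{T}\|w_t^{k+1}-w_t^k\|_2^2+\sum_{t=1}^{T}\|u_t^{k+1}-u_t^k\|_2^2,
\end{equation*}
for some constant $C>0$ depending on $\rho$ and $H$.

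For step (ii), Theorem~\ref{thm:convergence} asserts that $L_\rho^k$ converges, hence it is bounded below by some $L^\ast$. Telescoping yields $C\sum_{k=0}^{\infty}a_k\leq L_\rho^0-L^\ast<\infty$, so $a_k$ is summable. For step (iii), let $v^k:=\min_{0\leq l\leq k}a_l$ and $k':=\lceil k/2\rceil$. Since $v^k\leq a_l$ for every $l\in[k',k]$, we get $(k-k'+1)\,v^k\leq\sum_{l=k'}^{k}a_l\leq\sum_{l\geq k'}a_l$, and the right-hand side tends to $0$ by summability while $k-k'+1\geq k/2$. Hence $k\cdot v^k\to 0$, i.e.\ $v^k=o(1/k)$.

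The main obstacle is step (i): establishing sufficient decrease despite the nonconvex, nonsmooth coupling term $\max(0,-u_{t,j}u_{t+1,j})$ and controlling the dual-update slack in terms of the primal differences. Fortunately the bulk of this work is already contained in the proof of Theorem~\ref{thm:convergence}, so most of the argument reduces to repackaging the inequalities produced there and reading off the quadratic term; the remaining telescoping and the summability-to-$o(1/k)$ reduction are standard.
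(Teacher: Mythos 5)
Your argument is correct and is essentially the proof the paper delegates to its references: the paper's own ``proof'' of this theorem is a one-line citation to Theorem 3 of \cite{wang2019multi} and Lemma 1.2 of \cite{deng2017parallel}, which carry out exactly your sufficient-decrease, telescoping, and running-minimum template. The only small slip is attributing $(\rho-2H)$-strong convexity to the $w_t$-subproblem itself --- the convex loss plus the quadratic penalty already gives $\rho$-strong convexity, and the $2H$ margin is really what is needed to absorb the dual-ascent increase --- but this does not affect the conclusion.
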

\vspace{-0.5cm}
\begin{proof}The proof can be found in \autoref{appendix:convergence_analysis} in supplementary.
\end{proof}

\vspace{-0.3cm}
\subsection{Time complexity analysis}
\label{time complexity}
In Algorithm 1, we denote the number of iterations for miADMM as $l_1$ and the number of iterations for (projected) gradient descent as $l_2$. The time complexity for one iteration of gradient descent for subproblem of $w_t$ should be the time complexity for calculating the gradient for $w_t$. For example, in regression problem with $L_2$ regularization, the gradient for $w_t$ is $2(X_t^{\intercal}X_t+(\lambda+\rho)I)w_t - (2X_t^{\intercal}Y_t+\rho(u_t^k-y_t^k/\rho))$. The time complexity for calculating this gradient should be $\mathcal{O}(dm)$. In addition, we analytically solve the subproblem for a single $u_{tj}$ (which is a scalar), where we find a minima for a univariate piece-wise quadratic function. Therefore, the time complexity for solving all subproblems of $u$ should be $\mathcal{O}(Tdm)$. Hence, the total time complexity of our miADMM algorithm is $\mathcal{O}(l_{1}(l_{2}T(dm)+Tdm))$ and can be simplified as $\mathcal{O}(l_{1}l_{2}Tdm)$, which is linear to the sample size.

\section{Experiments}
In this section, the performance of our SRML framework is evaluated using several synthetic and real-world datasets against the state-of-the-art, on various aspects including accuracy, efficiency, convergence, sensitivity, scalability, and qualitative analyses. The experiments were performed on a 64-bit machine with a 8-core processor (i9, 2.4GHz), 64GB memory.
% In this section, we first describe the experimental settings including the datasets, comparative methods, evaluation metrics, and the hyper-parameter tuning.
% Then, we present the experimental results of the proposed approach on both synthetic and real-world data sets are presented in the appendix. 
% Finally, the complexity and convergence analysis of the optimization method is presented.

\subsection{Experimental Settings}
\textbf{Synthetic Datasets}: There are 3 synthetic datasets named Synthetic Dataset 1, Synthetic Dataset 2, and Synthetic Dataset 3, whose generation process is elaborated in the following.
We generate $T$ tasks ($T = 20$) and for each task $i$ we generate $m$ samples ($m=100$).
The input data $X_i \in \mathbb {R}^{m \times d}$ for each task $i$ is generated from $\displaystyle X_i \sim {\mathcal {N}}(\mu ,I) + \eta_i$, with mean  $\mu=\mathbf{0}$. Here $\eta_i \sim {\mathcal {U}}(\mathbf{0} ,\mathbf{10}) $ represents the bias values of the features for $i$-th task. Next, we generate feature weight, following three steps: 1) Generate the polarity of features. We define $P\in\{0,-1,1\}^d$ as the signs of all the features, which is generated by obtaining the signs of a $d$-dimension vector sampled from an isotopic Gaussian ${\mathcal {N}}(\mathbf{0}, I)$. 2) Generate the feature weights. For each task $i$, we first calculate the weight magnitude $\tilde W_i\in\mathbb{R}^d$ which is the absolute value of a randomly sampled vector from an isotropic Gaussian ${\mathcal {N}}(\mathbf{0}, I)$. Then the final feature weight is calculated by assembling the sign and magnitude by $W_i = P \otimes \tilde W_{i}$, where $\otimes$ is element-wise multiplication. 3) Add noise the weight matrix. We add noise to the weight matrix $W_i$ for each task $i$ by randomly flipping the sign of $10\%$ of the weights. The generation process of target variable differs across different synthetic datasets.
For regression problem, the target variable of the $i$-th task is generated as: $Y_i = X_i\cdot W_i + \epsilon_i$, where $\epsilon_{i} \sim \mathcal {N} (\mathbf{0}, 0.1\cdot I)$. For classification problem, the target variable is generated as $Y_i =  \sigma(X_i\cdot W_i + \epsilon_i)$, where $Y_i=1$ if $Y_i \geq 0.5$ and $Y_i=0$ $o.w.$, and $\sigma$ is the sigmoid function. \textbf{Synthetic Dataset 1} is for regression, with 20 tasks, 100 instances per task, and 25 features. \textbf{Synthetic Dataset 2} is for regression, with 100 tasks, 100 instances per task, and 1000 features. \textbf{Synthetic Dataset 3} is for classification, with 5 tasks, 100 instances per task, and 25 features. 
%  In this section, we generated three different synthetic datasets, namely SyntheticRegressiont20m100d25, SyntheticClassificationt5m100d25 and SyntheticHighdimensiont100m100d1000. We set 20 tasks and within each task 100 instances and 25 features for regression problem to get SyntheticRegressiont20m100d25. In addition, We set 5 tasks and within each task 100 instances and 25 features for binary classification problem to get SyntheticClassificationt5m100d25. Finally, We set 100 tasks and within each task 100 instances and 1000 features for high-dimensional regression problem to get SyntheticHighdimensiont100m100d1000.

\textbf{Real-World Datasets: }Five real-world datasets were used to evaluate the proposed methods and the comparison methods, including: \textbf{1) School Dataset~\cite{10.2307/2348496}, 2) Computer Dataset~\cite{lenk1996hierarchical}, 3) Facebook Metrics Dataset~\cite{moro2016predicting}, 4) Traffic SP Dataset~\cite{Ferreira2016CombinationOA}, and 5) Cars Dataset~\cite{lock19931993}.} Their detailed descriptions and download links are elaborated in Section \ref{sec:real_world_datasets} of our supplementary material.

\textbf{Comparison Methods and Baseline.}
Existing sparse feature learning and multitask learning methods have been included to compare the performance with the proposed SRML models.
\begin{enumerate*}[label=(\arabic*)]
%\item Single task learning method (STL) that learn the tasks separately with $\ell_2$-norm regularization.
\item Lasso is an $\ell_1$-norm regularized method which introduce sparsity into the model to reduce model complexity and feature learning, and that the parameter controlling the sparsity is shared among all tasks~\cite{tibshirani1996regression}.
\item Join Feature Learning (L21) assumes the tasks share a set of common features that represent the  relatedness of multiple tasks~\cite{argyriou2007multi}.
\item Convex alternating structure optimization (cASO) decomposes the predictive model of each task into two components: the task-specific feature mapping and task-shared feature mapping~\cite{chen2009convex}.
\item Robust multi-task learning (RMTL) method assumes that some tasks are more relevant than others. It assumes that the model $W$ can be decomposed into a low rank structure $L$ that captures task-relatedness and a group-sparse structure $S$ that detects outliers~\cite{chen2011integrating}. 
% If a task is not an outlier, then it falls into the low rank structure $L$ with $S$ being a zero vector
\item Sparse Structure-Regularized Multi-task Learning (SRMTL) represents the task relationship   using a graph where each task is a node, and two nodes are connected via an edge if they are related~\cite{evgeniou2004regularized}. \item Strict Sign-regularized Multi-task learning (SSML) is our method's baseline version which follows Equation \eqref{eq:wmtl}, without slack variable in our SRML to enhance robustness. 
\end{enumerate*}
% The implementation of  multi-task learning methods is based on~\cite{zhou2011malsar}.  

\textbf{Evaluation Metrics:} 
To evaluate the performance of the methods for the regression problem, we employ the mean absolute error ($MAE$), mean square error ($MSE$), and the mean square logarithmic error ($MSLE$).
Note that better regression performance is indicated by the smaller value of $MSE$ or $MAE$. 
For the classification problem, the accuracy (ACC), area under the curve score (AUC), and mean precision average (MAP) are used to evaluate the performance, where a larger value denotes better performance. 

\textbf{Hyperparameter Tuning:}
Each task data is split into $60\%$ (for training) and $40\%$ (for testing).
For hyper-parameters tuning of our method and all the comparison methods, cross validation is applied on the training set via 5-fold cross validation and grid search (logarithmic search on the range $\{10^{-3} \dots 10^{3} \}$), particularly for the values of regularization terms. 
% Parameter sensitivity analyses have also been presented in Section \ref{results on feature learning} in supplementary material.
% For each dataset, we conduct experiments with different training ratios. 
% For the final results, we report the mean and standard deviation over 10 runs on the test set.

\subsection{Experimental Results}

\vspace{-0.3cm}

\begin{table*}[htbp]
  \centering
  \scriptsize
  \caption{Performance on real-world datasets (MSE)}
  \vspace{-0.2cm}
    \begin{tabular}{lp{0.4cm}lp{0.25cm}lp{0.85cm}lp{0.85cm}lp{0.20cm}lr}
    \toprule
    \textbf{Model} & \multicolumn{2}{c}{\textbf{School}} & \multicolumn{2}{c}{\textbf{Computers}} & \multicolumn{2}{c}{\textbf{Cars}} & \multicolumn{2}{c}{\textbf{Facebook}} &  \multicolumn{2}{c}{\textbf{TrafficSP}}&Runtime \\
    \midrule
    CASO  &   107.39  & $\pm$1.65 & 31.91 & $\pm$5.21 & 2.36E+08 & $\pm$1.62E+08 & 1.50E+05 & $\pm$8.46E+04  &   9.83  & $\pm$1.32 &45.49 seconds\\
    L21   &   107.78  & $\pm$1.66 & 31.91 & $\pm$5.21 & 2.09E+08 & $\pm$1.34E+08 & 1.52E+05 & $\pm$8.13E+04 &   10.46  & $\pm$1.21 & 5.81 seconds\\
    LASSO &   108.30  & $\pm$1.65 & 31.91 & $\pm$5.21 & 2.09E+08 & $\pm$1.34E+08 & 1.52E+05 & $\pm$8.13E+04 &   10.46  & $\pm$1.21 & 2.84 seconds\\
    RMTL  &   108.16  & $\pm$1.65 & 39.89 & $\pm$7.11 & 2.12E+08 & $\pm$1.34E+08 & 1.53E+05 & $\pm$8.03E+04  &   10.24  & $\pm$1.38 & 12.09 seconds\\
    SSML  &   107.89  & $\pm$1.56 & 31.93 & $\pm$5.21 & 3.34E+08 & $\pm$3.97E+08 & 1.51E+05 & $\pm$8.03E+04 &     9.70  & $\pm$1.23 &8.62 seconds\\
    SRML  & \textbf{106.65} & $\pm$1.90 & \textbf{30.63} & $\pm$5.78 & \textbf{1.89E+08} & $\pm$1.05E+08 & \textbf{1.49E+05} & $\pm$8.59E+04  & \textbf{9.52} & $\pm$1.10 &7.87 seconds\\
    \bottomrule
    \end{tabular}%
  \label{tab:results_reg_realdatasets}%
  \vspace{-0.2cm}
\end{table*}

\begin{table*}
%\begin{wraptable}{l}{0.45\linewidth}
\scriptsize
\centering
\parbox{.49\linewidth}{
\caption{Perf. on Synthetic Dataset 1. \label{tab:results_reg_task_syntheticdataset}}
\vspace{-0.3cm}
\begin{tabular}{lccc}
    \toprule
    MODEL & \multicolumn{1}{c}{MAE} & \multicolumn{1}{c}{MSE} & \multicolumn{1}{c}{MSLE} \\
    \midrule
    CASO  & 6.96E+01 & 8.55E+03 & 9.21E-03 \\
    L21   & 7.12E+01 & 8.96E+03 & 6.23E-03 \\
    LASSO & 7.12E+01 & 8.96E+03 & 6.23E-03 \\
    RMTL  & 6.80E+01 & 8.14E+03 & 1.13E-02 \\
    SSML  & 2.73E+03 & 1.28E+07 & 1.20E+00 \\
    SRML & \textbf{2.02E+00} & \textbf{1.05E+01} & \textbf{1.36E-06} \\
    \bottomrule
    \end{tabular}
}
\hspace{-1.5cm}
\parbox{.49\linewidth}{
\caption{Perf. on Synthetic Dataset 2. \label{tab:results_1k_syntheticdataset}}
\vspace{-0.3cm}
\begin{tabular}{lccccc}
    \toprule
    MODEL & \multicolumn{1}{c}{NMAE} & \multicolumn{1}{c}{NMSE} & \multicolumn{1}{c}{MAE} & \multicolumn{1}{c}{MSE} & \multicolumn{1}{c}{MSLE} \\
    \midrule
    CASO  & 0.0866 & 2727.7 & 1.9656E+4 & 6.1854E+8 & 7.89 \\
    L21   & 0.0856 & 2671.3 & 1.9426E+4 & 6.0574E+8 & 6.90 \\
    LASSO & 0.0856 & 2671.3 & 1.9426E+4 & 6.0574E+8 & 6.90 \\
    RMTL  & 0.0856 & 2671.3 & 1.9426E+4 & 6.0574E+8 & 6.90 \\
    SSML  & 0.0873 & 2764.2 & 1.9799E+4 & 6.2681E+8 & 6.98 \\
    SRML & \textbf{0.0856} & \textbf{2671.1} & \textbf{1.9425E+4} & \textbf{6.0571E+8} & \textbf{6.87} \\
    \bottomrule
    \end{tabular}
}
\vspace{-0.2cm}
\end{table*}

% % \parbox{.95\linewidth}
% % {
% \begin{table*}[h!]
% \scriptsize
% \centering
% \caption{Performance on Synthetic Dataset 2. \label{tab:results_1k_syntheticdataset}}
% %\vspace{-0.2cm}
% \begin{tabular}{lccccc}
%     \toprule
%     MODEL & \multicolumn{1}{c}{NMAE} & \multicolumn{1}{c}{NMSE} & \multicolumn{1}{c}{MAE} & \multicolumn{1}{c}{MSE} & \multicolumn{1}{c}{MSLE} \\
%     \midrule
%     CASO  & 0.0866 & 2727.7 & 1.9656E+4 & 6.1854E+8 & 7.89 \\
%     L21   & 0.0856 & 2671.3 & 1.9426E+4 & 6.0574E+8 & 6.90 \\
%     LASSO & 0.0856 & 2671.3 & 1.9426E+4 & 6.0574E+8 & 6.90 \\
%     RMTL  & 0.0856 & 2671.3 & 1.9426E+4 & 6.0574E+8 & 6.90 \\
%     SSML  & 0.0873 & 2764.2 & 1.9799E+4 & 6.2681E+8 & 6.98 \\
%     SRML & \textbf{0.0856} & \textbf{2671.1} & \textbf{1.9425E+4} & \textbf{6.0571E+8} & \textbf{6.87} \\
%     \bottomrule
%     \end{tabular}
% \end{table*}
% % }

\begin{table}[h!]
% \hfill
% \parbox{.45\linewidth}{
\scriptsize
\centering
\caption{Perf. on Synthetic Dataset 3 \label{tab:results_clf_task_syntheticdataset}}
\vspace{-0.2cm}
\begin{tabular}{lccc}
    \toprule
    MODEL & \multicolumn{1}{c}{ACC} & \multicolumn{1}{c}{AUC} & \multicolumn{1}{c}{MAP} \\
    \midrule
    CASO  &            82.40  &            82.96  &              82.96  \\
    L21   &            81.30  &            81.85  &              81.79  \\
    LASSO &            82.70  &            83.09  &              83.06  \\
    SRMTL  &            81.35  &            81.99  &              81.82  \\
    SRML & \textbf{           82.95 } & \textbf{           90.99 } & \textbf{             84.72 } \\
    \bottomrule
    \end{tabular}
%}
\vspace{-0.7cm}
\end{table}

% \begin{wraptable}{l}{0.45\linewidth}
% \scriptsize
% %\parbox{.45\linewidth}{
% \centering
% \caption{Performance on Synthetic Dataset 1. \label{tab:results_reg_task_syntheticdataset}}
% \begin{tabular}{lccc}
%     \toprule
%     MODEL & \multicolumn{1}{c}{MAE} & \multicolumn{1}{c}{MSE} & \multicolumn{1}{c}{MSLE} \\
%     \midrule
%     CASO  & 6.96E+01 & 8.55E+03 & 9.21E-03 \\
%     L21   & 7.12E+01 & 8.96E+03 & 6.23E-03 \\
%     LASSO & 7.12E+01 & 8.96E+03 & 6.23E-03 \\
%     RMTL  & 6.80E+01 & 8.14E+03 & 1.13E-02 \\
%     SSML  & 2.73E+03 & 1.28E+07 & 1.20E+00 \\
%     SRML & \textbf{2.02E+00} & \textbf{1.05E+01} & \textbf{1.36E-06} \\
%     \bottomrule
%     \end{tabular}
% %}
% \end{wraptable}

\textbf{Effectiveness Evaluation in Synthetic Datasets: }The empirical results show that our SRML model achieves the best performance on synthetic datasets for regression task (\autoref{tab:results_reg_task_syntheticdataset}) and classification task (\autoref{tab:results_clf_task_syntheticdataset}). 
% In the tables we show the results with $60\%$ training size ($R=0.6$), moreover the results are consistent on under different training ratios($R \in \{ 0.6, 0.7, 0.8, 0.9 \}$).
% Our proposed SRML outperforms baseline models on both classification and regression tasks on synthetic datasets.  
In the case of the regression task, our SRML model outperforms the baseline models by a large margin for all the metrics. For the MAE, it achieves an order of magnitude better score w.r.t the best baseline model RMTL. The MSE and MSLE metrics show similar improvements (several orders of magnitude w.r.t the baseline model). Although SSML uses a similar approach, it enforces a strict polarity regularization compared to our model. The hard constraints in the SSML model fail to capture changes in features' polarity between tasks and achieve the worst performance compared to other baseline models. For the binary classification task, our model achieves the best score in every metric (ACC, AUC, MAP). The AUC metric shows a significant margin ($8\%$) compared to the baseline, which indicates that our model will perform better at different thresholds for labels.
However, the margin of improvement is small for ACC and MAP metric w.r.t. the best comparison method. The reason for the small margin improvement on ACC and MAP for the classification task is because the dependent variable is less sensitive to the variation of the magnitude of weights of the model parameters in the dataset generation. The SSML has been excluded from classification experiments as the reference paper only provide their implementation for regression tasks.
%\vspace{-0.2cm}

For the Synthetic Dataset 2, our SRML with $L_1$ regularization achieves the best score in each metric shown in \autoref{tab:results_1k_syntheticdataset}, where the dimension for features is 1,000. NMAE and NMSE stands for Normalized Mean Absolute Error and Normalized Mean Square Error respectively \cite{bishop2006pattern}, which is the MAE and MSE normalized by the range of ground-truth label. We notice the margin for our SRML in high-dimensional case is smaller than that in \autoref{tab:results_reg_task_syntheticdataset}. This is possibly because for most of the features with zero weights, the sparsity regularization might be already enough. But the sign constraint is still important for those nonzero features and does not harm those with zero weights.

In order to investigate whether and how the proposed sign regularization approaches in SRML impact and benefit the learning of feature weight signs, \autoref{fig:feature_selection} shows a comparison among different methods in terms of the difference between their learned signs and ground truth signs in all tasks and features. The first subplot labeled ``syntheticWMTLR4W'' shows the ground truth feature weights' signs, while the other subplots correspond to the differences between the weights' signs learned by different models and the ground truth signs shown in the first subplot. It can be clearly seen that our SRML achieves an exact match to the ground truth as the cells are all-white, meaning ``no difference'' to the ground truth. It hence outperforms the competing methods who do not leverage the sign-regularization for instructing multitask learning for this types of tasks. Moreover, as we expected, the baseline SSML has numerous cells different to the ground truth because it leverage strict constraints forcing each feature's weights to be the same across tasks. Therefore, the effectiveness of our ``slack mechanism'' is clearly shown by contrasting the performance between SSML and SRML.

\textbf{Effectiveness Evaluation in Real-world Datasets:} \autoref{tab:results_reg_realdatasets} shows the results of our method SRML and the other methods on the 5 real-world datasets in the MSE metric (average over 10 runs and the standard deviation). Our model SRML model outperforms the comparison methods on all the datasets, by a clear margin. In the Cars dataset our model outperform with a considerable margin ($9\%$  w.r.t to the 2nd best model L21) while in Computer Dataset we outperform by around $3\%$.  
We found our model performs well on datasets (e.g., Computers and Cars) with mixed features types (categorical and real values), since in these types of datasets we can exploit the features' sign correlation between different tasks. 
The second best method in general is our baseline SSML which also involves sign-regularization but without slack to absorb noise in real-world data. 
LASSO is relatively weak in most of the datasets due to its incapability of utilizing the relationship among tasks. 
In addition, the training runtime on TrafficSP Dataset is also presented, where we can see that the fastest method is LASSO due to its relative simplicity. Our method, though is slower than simple methods such as LASSO and L21, is still highly efficient comparing with other complex methods such as CASO and RMTL. The runtime on other datasets follow similar trend.

\textbf{Convergence Analysis: }
The trends of objective function value, primal and dual residual during the optimization of one training process are illustrated in  \autoref{fig:obj_conv}, \autoref{fig:primal_conv}, and \autoref{fig:dual_conv}, respectively. They demonstrate the convergence of all of them,which is consistent with the convergence analysis in Section \ref{convergence analysis}.

\begin{figure*}[h!]
\vspace{-1cm}
\begin{center}
\begin{subfigure}[b]{.3\textwidth}
\scalebox{.8}{
\includegraphics[width=\textwidth]{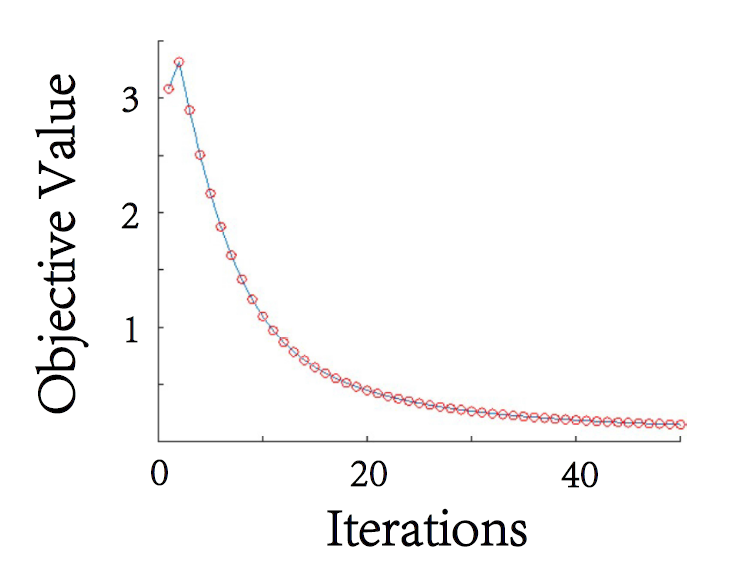}
}
\caption{Iteration v.s. objective value}
\vspace{-0.2cm}
\label{fig:obj_conv}
\end{subfigure}
\begin{subfigure}[b]{.3\textwidth}
\scalebox{.8}{
\includegraphics[width=\textwidth]{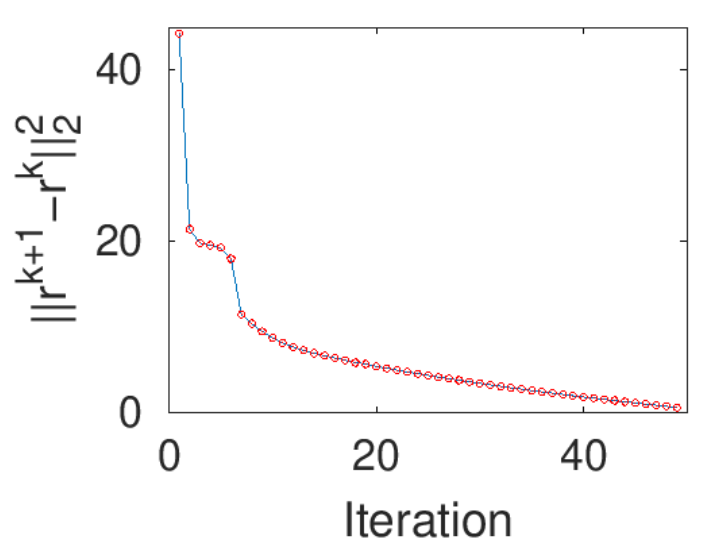}
}
\caption{Iteration v.s. primal residual}
\vspace{-0.2cm}
\label{fig:primal_conv}
\end{subfigure}
\begin{subfigure}[b]{.3\textwidth}
\scalebox{.8}{
\includegraphics[width=\textwidth]{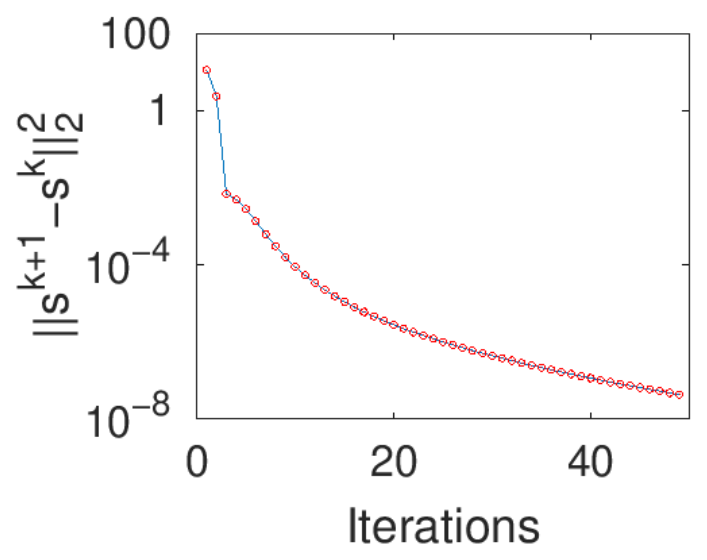}
}
\caption{Iteration v.s. dual residual}
\vspace{-0.2cm}
\label{fig:dual_conv}
\end{subfigure}
\caption{Convergence on synthetic dataset.}
\label{fig:convergence_curves}
\end{center}
\end{figure*}

\begin{figure*}[h!]
\vspace{-0.5cm}
\begin{center}
\begin{subfigure}[b]{.3\textwidth}
\scalebox{.8}{
\includegraphics[width=\textwidth]{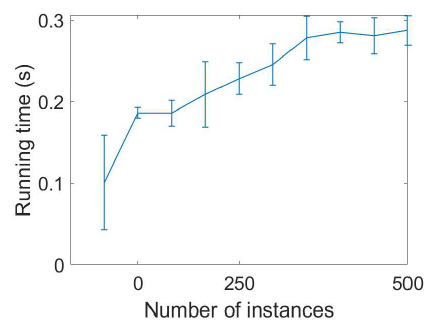}
}
\caption{Runtime v.s. \#Instances}
\vspace{-0.2cm}
\label{fig:scalability_m}
\end{subfigure}
\begin{subfigure}[b]{.3\textwidth}
\scalebox{.8}{
\includegraphics[width=\textwidth]{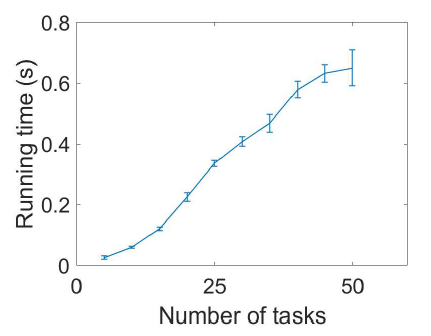}
}
\caption{Runtime v.s. \#Tasks}
\vspace{-0.2cm}
\label{fig:scalability_t}
\end{subfigure}
\begin{subfigure}[b]{.3\textwidth}
\scalebox{.8}{
\includegraphics[width=\textwidth]{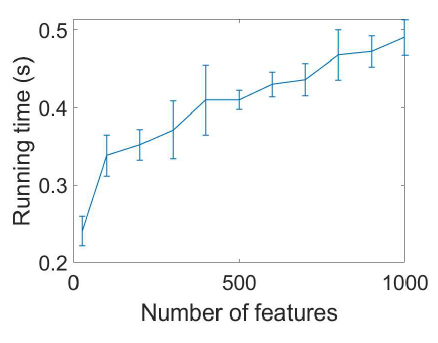}
}
\caption{Runtime v.s. \#Features}
\vspace{-0.2cm}
\label{fig:scalability_d}
\end{subfigure}
\caption{Scalability Analysis on synthetic dataset.}
\label{fig:scalability_curves}
\end{center}
\vspace{-0.2cm}
\end{figure*}

\begin{figure*}[h!]
\vspace{-0.2cm}
\begin{center}
\begin{subfigure}[b]{.25\textwidth}
\scalebox{.8}{
\includegraphics[width=\textwidth]{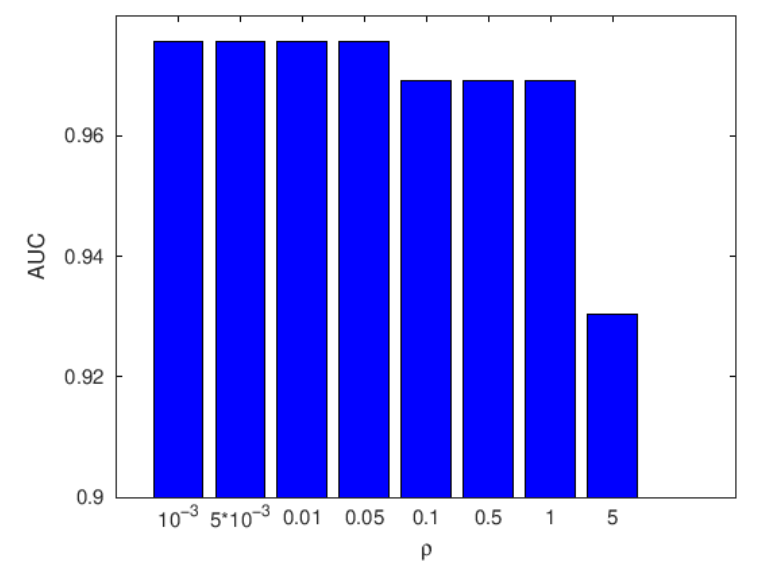}
}
\caption{Sensitivity of parameter $\rho$}
\vspace{-0.2cm}
\label{fig:sensitivity_curves_p}
\end{subfigure}
\begin{subfigure}[b]{.25\textwidth}
\scalebox{.8}{
\includegraphics[width=\textwidth]{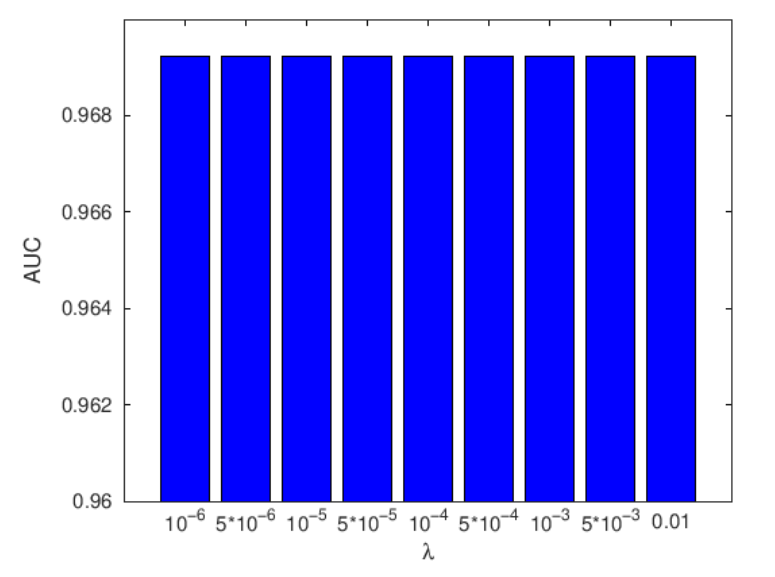}
}
\caption{Sensitivity of parameter $\lambda$}
\vspace{-0.2cm}
\label{fig:sensitivity_curves_l}
\end{subfigure}
\begin{subfigure}[b]{.25\textwidth}
\scalebox{.8}{
\includegraphics[width=\textwidth]{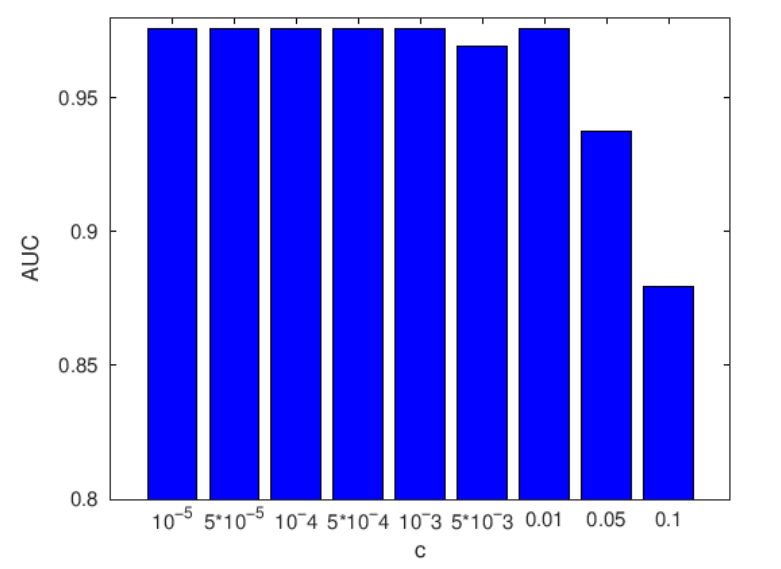}
}
\caption{Sensitivity of parameter c}
\vspace{-0.2cm}
\label{fig:sensitivity_curves_c}
\end{subfigure}
\caption{Sensitivity Analysis on synthetic dataset.}
\label{fig:sensitivity_curves}
\end{center}
\vspace{-0.2cm}
\end{figure*}

\begin{figure*}[h!]
\centering
\vspace{-0.3cm}
\includegraphics[width=\textwidth]{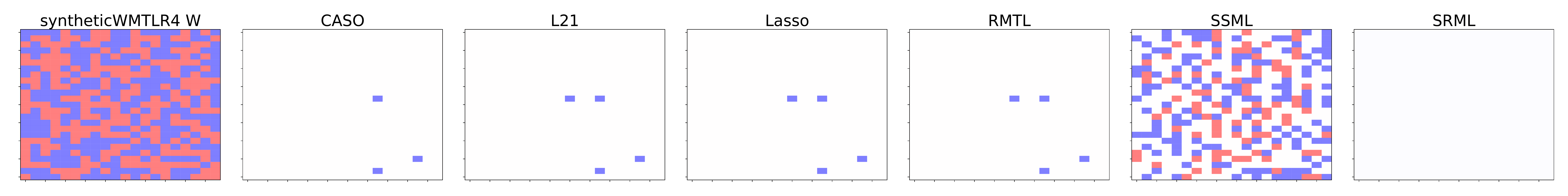}
\vspace{-0.8cm}
\caption{Illustration on how well the signs of the learned feature weights match the ground truth on Synthetic Dataset 1.
For each model, we show the selected weights ($y$ axis) of each of the 20 tasks ($x$ axis), where each cell's color  denote if the sign matches to the ground truth (\textit{white} color), or there is a difference either positive (\textit{red}) or negative (\textit{blue}). Hence, our model's results completely match the ground truth.
}
\label{fig:feature_selection}
\vspace{-0.2cm}
\end{figure*}

\textbf{Scalability Analysis: }\autoref{fig:scalability_curves} illustrates the scalability of the proposed SRML model in the regression synthetic dataset in the training running time when the size of the dataset varies. Each setting of synthetic dataset was generated randomly for ten times and thus the standard deviation was calculated and shown by the error bar. Specifically, \autoref{fig:scalability_m} shows that when the numbers of tasks and features are fixed, the runtime increases near-linearly when the number of instances increases. In addition, \autoref{fig:scalability_t} shows that when the number of instances and features are fixed, the runtime also increases linearly when the number of tasks increases. In the last one, which is \autoref{fig:scalability_d}, when the numbers of tasks and instances are fixed, the runtime increases linearly when the number of features increases. All the observations are consistent with the theoretical analysis on time complexity in Section \ref{time complexity}.

\textbf{Sensitivity Analysis: }
The sensitivity of hyperparameters of the proposed SRML on the classification synthetic dataset is illustrated in \autoref{fig:sensitivity_curves}. \autoref{fig:sensitivity_curves_p} illustrates that our model performs best with parameter $\rho$ smaller than 0.1. In addition, \autoref{fig:sensitivity_curves_l} shows our model is barely sensitive to the coefficient regularization $\lambda$ \autoref{fig:sensitivity_curves_l}. This is potentially reasonable because the synthetic dataset for this experiment has low dimensions in features and no sparsity. Last, \autoref{fig:sensitivity_curves_c} shows our SRML model performs best when the parameter $c$ is smaller than 0.1. This makes sense because we added noise into the sign of ground truth weights and since smaller $c$ provides SRML more slacking our model could achieve better score.

\section{Conclusions}

Considering the assumption that in some real-world applications, the tasks share a similar polarity for features across tasks, we propose sign-regularized multi-task learning framework by enforcing the learning weights to share polarity information to neighbors tasks. Experiments on multiple synthetic and real-world datasets demonstrate the effectiveness and efficiency of our methods in various metrics, compared with several comparison methods and baselines. Various analyses such as convergence analyses, scalability have also been done theoretically and experimentally. Additional analyses on the learned parameters such as sensitivity analyses and qualitative analyses on learned parameters have also been discussed.

% \section*{Broader Impact}
% This paper proposes a new generic multi-task learning framework which can cover a broad range of problem where different tasks can share their information on whether a feature should contribute positively or negatively. Since this framework only regularize the feature weight polarity instead of magnitude, this framework can be more flexible with milder assumption and hence could be adaptive into more applications without requiring too strong prior knowledge on task relation. The proposed framework is very generic and could be potentially applied to many real-world applications such as event forecasting using social media, neural image classification, and spatio-temporal housing price prediction. The proposed framework which is tailored for weight sign regularization could also be combined with existing works which do weight magnitude regularization (e.g., $\ell_p$-norm based) to achieve adaptive control and trade-off on joint sign and magnitude regularization. \textcolor{red}{Specifically, the proposed framework could be combined with nonlinear models such as deep neural networks as a sign-regularization term. In this paper, we focus on the comparison of regularization methods and we constantly use linear models. However, due to the flexibility and generality of the framework, it could be further applied in deep multi-task learning settings and help improve the overall performance in certain situations.}

\small
\bibliographystyle{plainnat}
\bibliography{main}

\clearpage

\appendix

{\Large Supplementary Materials}

%%%
%NOTE: Please consider the following remapping of one incorrect reference in the submitted version of the main paper:
%\begin{enumerate}
%    \item Line 151 in main paper: ".. 36 in B.2" --> Lemma \ref{lemma:F bound} in \autoref{error bound proof} of supplementary.
%\end{enumerate}

\section{Optimization Method}
\label{optimization method}
The detailed optimization algorithm of our SRML model is shown in Algorithm \ref{alg:swmtl} as follow.

\begin{algorithm}[H]
   \caption{ADMM Algorithm to Solve \autoref{eq:aug_lagrangian_swmtl_3}}
   \label{alg:swmtl}
   \begin{algorithmic} 
   \STATE Denote $w=[w_1;\cdots;w_T]$,  $u=[u_1;\cdots;u_T]$
   \STATE Denote $y=[y_1;\dots;y_T]$
   \STATE Initialize $\rho,c, k=0$
   \REPEAT
   \FOR{$t=1$ {\bfseries to} $T$}
   \STATE  $w_{t}^{k+1}\leftarrow$ solve \autoref{eq:subproblem_w_swmtl_3}
   \ENDFOR
   \STATE  $u_{1}^{k+1} \leftarrow$ solve \autoref{eq:subproblem_u_swmtl_3} %$w_{1}^{k+1} \leftarrow \arg\min\nolimits_{w_1} \mathcal{L}_i(w_1) + \Omega_i(w_1) +  (\rho/2) \left \lVert w_1-z_{1}^k+y_{1}^{k}/\rho \right \rVert_{2}^{2} +  c \displaystyle\sum_{j}^m \max {(0,-w_{1,j} w_{2,j}^k)}$
   \FOR{$t=2$ {\bfseries to} $T-1$}
   \STATE $u_{t}^{k+1}\leftarrow$ solve \autoref{eq:subproblem_n-1_swmtl_3}
%   \STATE $w_{i}^{k+1}\leftarrow\, & \DeclareMathOperator{\argmin}{arg\,min}_{w_i} \mathcal{L}_i(w_i) + \Omega_i(w_i) 
%   + (\rho/2)\left\lVert w_i-z_{i}^k+y_{i}^{k}/\rho\right\rVert_{2}^{2} 
%   + c\displaystyle\sum_{j}^m \max {(0,-w_{i,j}w_{i-1,j}^{k+1})} 
%   + \max {(0,-w_{i,j}w_{i+1,j}^k)} $
   \ENDFOR
   \STATE $u_{T}^{k+1}\leftarrow$  solve \autoref{eq:subproblem_t_swmtl_3}
   \STATE $y^{k+1}\leftarrow\, y^k + \rho(w^{k+1}-u^{k+1})$
   \STATE $ k\leftarrow\, k+1$.
   \UNTIL{convergence}
   \STATE Output $w$, \ \ $u$.
\end{algorithmic}
\end{algorithm}

For each iteration $k$, the $T$ subproblems for updating $w_t$'s are as follow: \\
For $t = 1,2,\cdots,T$,

\begin{equation}
\begin{split}
w_{t}^{k+1}\leftarrow \, &\arg\min_{w_t} \mathcal{L}_t(w_t) + \lambda\Omega(\{w_t\}_t^\intercal) + (\rho/2)\left\lVert w_t-u_{t}^k+y_{t}^{k}/\rho\right\rVert_{2}^{2}
\end{split}
\label{eq:subproblem_w_swmtl_3}
\end{equation}

where we can use gradient descent when $\Omega(\cdot)$ is differentiable and projected gradient descent, otherwise.

The $T$ subproblems for updating $u_i$'s are as follow: 

\begin{equation}
\begin{split}
u_{1}^{k+1}\leftarrow \, &\arg\min_{u_1} c \displaystyle\sum_{j=1}^d \max {(0,-u_{1,j}u_{2,j}^k)} - (y_1^{k})^\intercal u_1 + (\rho/2)\left\lVert w_1^{k+1}-u_{1}\right\rVert_{2}^{2}  
\end{split}
\label{eq:subproblem_u_swmtl_3}
\end{equation}

For $t=2,3,\dots,T-1$:

\begin{equation}
\begin{split}
u_{t}^{k+1}\leftarrow\, & \arg\min_{u_t} c\displaystyle\sum_{j=1}^d [ \max {(0,-u_{t,j}u_{t-1,j}^{k+1})} + \max {(0,-u_{t,j}u_{t+1,j}^k)} ] - (y_t^{k})^\intercal u_i + (\rho/2)\left\lVert w_t^{k+1}-u_{t}\right\rVert_{2}^{2}
\end{split}
\label{eq:subproblem_n-1_swmtl_3}
\end{equation}

For $t=T$, we have:
\begin{equation}
\begin{split}
u_{T}^{k+1}\leftarrow\, &\arg\min_{u_T} c \displaystyle\sum_{j=1}^d \max {(0,-u_{T,j}u_{T-1,j}^{k+1})} - (y_T^{k})^\intercal u_T + (\rho/2)\left\lVert w_T^{k+1}-u_T\right\rVert_{2}^{2}  
\end{split}
\label{eq:subproblem_t_swmtl_3}
\end{equation}

For each subproblem of $u_t\in\mathbb{R}^{1\times d}$, the objective function is basically a number of $d$ functions of $u_{t,j}\in\mathbb{R}$. For example, \autoref{eq:subproblem_u_swmtl_3} can be further split into the following separate problems for the decision variable $u_{t,j}$ which is a scalar: \\
For $j=1,2,3,\dots,d$:

\begin{equation}
\begin{split}
u_{1,j}^{k+1}\leftarrow \,& \arg\min_{u_{1,j}} c \max {(0,-u_{1,j}u_{2,j}^k)} - y_{1,j}^{k}u_{1,j} + (\rho/2)( w_{1,j}^{k+1}-u_{1,j})^{2}
\end{split}
\label{eq:sub_subproblem_u_swmtl_3}
\end{equation}

We can get the analytical solution for it as follows:

The RHS above is basically a piece-wise quadratic function depending on the sign of $u_{2,j}^k$. If $u_{2,j}^k < 0$, the analytical candidate solution to \autoref{eq:sub_subproblem_u_swmtl_3} is $[w_{1,j}^{k+1}+y_{1,j}^k/\rho]_{-}$ and $[w_{1,j}^{k+1}+(cu_{2,j}^k+y_{1,j}^k)/\rho]_{+}$. We only need to compare these two candidates by taking them back to \autoref{eq:sub_subproblem_u_swmtl_3} to see for which one the objective function is smaller. Similar case if $u_{2,j}^k \geq 0$. The analytical solutions for $u_{t,j}$ where $t>1$ can be obtained following similar way. 

Finally, we update the dual variable $y$ as follow:

\begin{equation}
\begin{split}
y^{k+1} = y^k + \rho([w_1^{k+1};\cdots;w_T^{k+1}]-[u_1^{k+1};\cdots;u_T^{k+1}])
\end{split}
\label{eq:subproblem_y_swmtl_3}
\end{equation}

\section{Theoretical Analysis}

\subsection{Theorem 1 Proof}
\label{error bound proof}
In this section, we provide the comprehensive proof for \autoref{thm:error bound}. First, we will give some necessary definition and lemma, and at the end of this section we present the proof for \autoref{thm:error bound}.

\begin{definition}
\label{def:F variable}
For a multisample $X \in (\mathbb{R}^d)^{mT}$, define the random variable
\begin{equation}
\begin{split}
F(\sigma)=F_{\sigma}\coloneqq \sup_{w\in\mathcal{F}_{\alpha,\beta}}\displaystyle\sum_{t,i}^{T,m}\sigma_{ti}\langle w_t,x_{ti}\rangle
\end{split}
\label{eq:r.v.def}
\end{equation}
where $\sigma$ stands for Rademacher variable, which is a set of i.i.d. uniform random variables on $\{-1,1\}$.
\end{definition}

\begin{lemma}
\label{lemma:F bound}
For the random variable $F_{\sigma}$, we have
\begin{equation}
\begin{split}
\mathbb{E}\{F_{\sigma}\}\leq \alpha\cdot\max_{1\leq t \leq T}\left\lVert x_t \right\rVert_{1,\infty}
\end{split}
\label{eq:lemma1}
\end{equation}
where the expectation is w.r.t. $\sigma$ and $x_t\in\mathbb{R}^{m\times d}$ is the input feature for the t-th task.
\end{lemma}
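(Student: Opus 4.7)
The plan is to prove the lemma in two clean steps: first reduce the supremum over $\mathcal{F}_{\alpha,\beta}$ to a deterministic quantity by exploiting the linearity of the objective in $w$, and then bound the resulting Rademacher sum pathwise so that taking the expectation becomes trivial.

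First, I would interchange the order of summation to rewrite
\begin{equation*}
F_\sigma \;=\; \sup_{w \in \mathcal{F}_{\alpha,\beta}} \sum_{t=1}^T \Big\langle w_t,\; z_t \Big\rangle,
\qquad z_t \coloneqq \sum_{i=1}^m \sigma_{ti}\, x_{ti} \in \mathbb{R}^d .
\end{equation*}
Since the feasible set includes the constraint $\sum_t \lVert w_t\rVert_1 \leq \alpha$, and since dropping the $\beta$-constraint can only enlarge the feasible set (thus only increases the supremum), $\ell_1/\ell_\infty$ duality yields
\begin{equation*}
\sum_{t=1}^T \langle w_t, z_t\rangle \;\leq\; \Big(\sum_{t=1}^T \lVert w_t\rVert_1\Big) \cdot \max_{1\leq t \leq T} \lVert z_t\rVert_\infty \;\leq\; \alpha \cdot \max_{1\leq t \leq T,\, 1\leq j \leq d} |z_{t,j}|.
\end{equation*}
As a side remark that also justifies the equality asserted in the Theorem 1 discussion, this bound is tight: the extremal configuration placing all $\ell_1$ mass on a single coordinate $(t^*,j^*)$ makes $w_t \otimes w_{t+1} = 0$ for every $t$, so the $\beta$-constraint is vacuously satisfied. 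This is the structural reason $\beta$ does not appear in the final generalization bound.

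Second, I would take expectation. The key observation is that $|\sigma_{ti}| = 1$ almost surely, so by the triangle inequality $|z_{t,j}| = \bigl|\sum_i \sigma_{ti} x_{tij}\bigr| \leq \sum_i |x_{tij}|$ pointwise in $\sigma$. Hence $\max_{t,j} |z_{t,j}| \leq \max_t \max_j \sum_i |x_{tij}| = \max_t \lVert x_t\rVert_{1,\infty}$, which is a deterministic quantity independent of $\sigma$; combining with the previous step yields $\mathbb{E}\{F_\sigma\} \leq \alpha \cdot \max_{1\leq t \leq T} \lVert x_t\rVert_{1,\infty}$, as claimed.

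There is no serious obstacle in this argument; it is essentially a two-line combination of Hölder's inequality with the boundedness of Rademacher variables. The only mild subtlety is fixing the convention for the mixed norm $\lVert x_t\rVert_{1,\infty}$ as $\max_{j}\sum_{i}|x_{tij}|$ (the maximum $\ell_1$ norm across the $d$ feature columns of $x_t \in \mathbb{R}^{m\times d}$), which is exactly what is needed so that the pathwise triangle inequality step delivers the stated norm on the right-hand side.
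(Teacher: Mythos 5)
Your proof is correct and follows essentially the same route as the paper's: linearize the objective in $w$, use the $\ell_1$-ball structure of $\mathcal{F}_{\alpha,\beta}$ to reduce the supremum to $\alpha\cdot\max_{t,j}\bigl|\sum_i x_{tij}\sigma_{ti}\bigr|$, and then bound that maximum pathwise by $\max_t\lVert x_t\rVert_{1,\infty}$ using $|\sigma_{ti}|=1$. The only (cosmetic) difference is that you obtain the middle step as an inequality via H\"older duality after discarding the $\beta$-constraint, whereas the paper asserts it as an equality via the vertex argument; your side remark on tightness recovers exactly that observation, so nothing is missing.
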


\begin{proof}
\begin{equation}
\begin{split}
\mathbb{E}\{F_{\sigma}\} & =  \mathbb{E}\{\sup_{w\in\mathcal{F}_{\alpha,\beta}}\displaystyle\sum_{t=1}^T\displaystyle\sum_{i=1}^m\sigma_{ti}\langle w_t, x_{ti}\rangle \} \ \ \text{(Definition~\autoref{def:F variable})}\\
& = \mathbb{E}\{\sup_{w\in\mathcal{F}_{\alpha,\beta}}\displaystyle\sum_{t=1}^T\langle w_t, \displaystyle\sum_{i=1}^m \sigma_{ti}x_{ti}\rangle \} \\
& = \mathbb{E}\{\sup_{w\in\mathcal{F}_{\alpha,\beta}}\displaystyle\sum_{t=1}^T\displaystyle\sum_{j=1}^d [(\displaystyle\sum_{i=1}^m x_{tij}\sigma_{ti})\cdot w_{tj}]\} \\
& = \alpha\cdot\mathbb{E}\{\max_{1\leq t \leq T, 1\leq j \leq d} |\displaystyle\sum_{i=1}^m x_{tij}\sigma_{ti}| \} \\
& \leq \alpha\cdot\max_{1\leq t \leq T}\left\lVert x_t \right\rVert_{1,\infty}
\end{split}
\label{eq:lemma1proof}
\end{equation}
The second equation is based on the linearity of inner product on $\mathbb{R}^d$. The third equation is simply reformulating the term inside the expectation to be a linear combination of each $w_{tj}$. The fourth equation is because for any given $X$ and $\sigma$, the term inside the $\sup$ is a linear function of $w_{tj}$. Meanwhile, recall the definition of $\mathcal{F}_{\alpha,\beta}$, the linear function of $w_{tj}$ will achieve the maximal value at the boundary of $\mathcal{F}_{\alpha,\beta}$. In fact, if we ignore the constraint of $\beta$, i.e. consider $\mathcal{F}_{\alpha} = \{w\in\mathbb{R}^{d \times T}: \textstyle\sum_{t=1}^T\left\lVert w_t \right\rVert_{1} \leq \alpha \}$, the linear function of $w_{tj}$ will achieve the maximal value at the vertexes of the region, where only one $w_{tj}$ equals to $\alpha$ while others equal to 0 (suppose the maxima is unique). Notice the constraint of $\beta$ only controls the sign of $w$ and doesn't have effect when $w_{tj}$ equals to 0, which corresponds to the case of vertexes. In other words, adding the constraint of $\beta$ back to our problem doesn't affect the maxima. Hence, the fourth equation holds. The last inequality is simply taking the possible maximal value inside the expectation, which is the $L_1$ norm of the column which has the largest $L_1$ norm among all the columns in different $x_t$ $(t=1,2,\cdots,T)$.
\end{proof}

\begin{theorem}
\label{thm:6}
$\forall{\epsilon} > 0$, let $\mu_1,\mu_2,\dots,\mu_T$ be the probability measure on $\mathbb{R}^d \times\mathbb{R}$. With probability of at least $1-\epsilon$ in the draw of $Z=(X,Y)\sim\prod_{t=1}^{T}{\mu_{t}^{m}}$, for any $w\in \mathcal{F}_{\alpha,\beta}$ we have:
\begin{equation}
\begin{split}
\mathbb{E}(w)-\mathbb{\hat{E}}(w|Z) & = \frac{1}{T} \displaystyle\sum_{t=1}^T\mathbb{E}_{(x,y)\sim{\mu_t}}[\mathcal{L}(\langle w_{t},x\rangle,y)] \\
&- \frac{1}{mT}\displaystyle\sum_{t,i}^{T,m}\mathcal{L}(\langle w_{t},x_{ti}\rangle,y_{ti}) \\
& \leq \frac{2L\alpha}{mT}\max_{1\leq t \leq T}\left\lVert x_t \right\rVert_{1,\infty} + \sqrt{\frac{9\ln{2/\epsilon}}{2mT}}
\end{split}
\label{eq:Thm3}
\end{equation}
\end{theorem}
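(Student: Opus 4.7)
The plan is to establish the standard uniform convergence bound via Rademacher complexity, using the machinery already set up in Lemma~2 (which bounds $\mathbb{E}\{F_\sigma\}$) together with a concentration inequality for the supremum deviation. Define $\Phi(Z) \coloneqq \sup_{w\in\mathcal{F}_{\alpha,\beta}}\bigl[\mathbb{E}(w)-\mathbb{\hat{E}}(w|Z)\bigr]$; the goal is to show $\Phi(Z)$ is small with high probability over the draw of $Z$.

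First I would apply McDiarmid's bounded differences inequality to $\Phi$. By Assumption~\ref{ass:obj_function}(1) the loss takes values in $[0,1]$, so replacing a single training sample $(x_{ti},y_{ti})$ with another i.i.d.\ sample changes $\mathbb{\hat{E}}(w|Z)$ by at most $\tfrac{1}{mT}$ uniformly in $w$, hence changes $\Phi$ by at most $\tfrac{1}{mT}$. McDiarmid then yields $\Phi(Z) \le \mathbb{E}[\Phi(Z)] + O\!\left(\sqrt{\ln(1/\epsilon)/(mT)}\right)$ with probability at least $1-\epsilon/2$ (the denominator $2/\epsilon$ in the statement hints at a union bound used elsewhere, which accounts for the $\ln(2/\epsilon)$ factor).

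Next I would bound $\mathbb{E}[\Phi(Z)]$ by symmetrization: introduce a ghost sample $Z'$, push the supremum inside using Jensen's inequality on $\mathbb{E}(w)=\mathbb{E}_{Z'}\mathbb{\hat{E}}(w|Z')$, and insert Rademacher variables $\sigma_{ti}$ to obtain
\begin{equation*}
\mathbb{E}[\Phi(Z)] \le \frac{2}{mT}\,\mathbb{E}_{Z,\sigma}\!\left[\sup_{w\in\mathcal{F}_{\alpha,\beta}}\sum_{t,i}^{T,m}\sigma_{ti}\,\mathcal{L}(\langle w_t,x_{ti}\rangle, y_{ti})\right].
\end{equation*}
Then I would invoke the Talagrand contraction principle: since by Assumption~\ref{ass:obj_function}(2) the map $u\mapsto \mathcal{L}(u,y)$ is $L$-Lipschitz for every $y$, the Rademacher averages above are bounded by $L$ times the corresponding average over the linear class, i.e.\ by $L\cdot\mathbb{E}[F_\sigma]$ in the notation of Definition~\ref{def:F variable}. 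Finally I would apply Lemma~\ref{lemma:F bound} to conclude $\mathbb{E}[F_\sigma] \le \alpha\cdot\max_{1\le t\le T}\|x_t\|_{1,\infty}$, giving the first term $\tfrac{2L\alpha}{mT}\max_t\|x_t\|_{1,\infty}$.

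Combining the symmetrized Rademacher bound with the McDiarmid deviation term produces exactly the claimed inequality (after absorbing the constants into $\sqrt{9\ln(2/\epsilon)/(2mT)}$). The main technical obstacle is the contraction step: one must verify that the Ledoux--Talagrand contraction inequality applies in the multi-task setting where the supremum ranges over the coupled constraint set $\mathcal{F}_{\alpha,\beta}$ rather than a product of per-task classes. This is not automatic because the Lipschitz loss is applied coordinatewise across $(t,i)$ pairs, so care is needed to argue that contraction goes through indexwise; fortunately each loss depends on only one scalar inner product $\langle w_t, x_{ti}\rangle$, so the standard argument (Ledoux--Talagrand, Theorem 4.12) applies directly to each Rademacher term. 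The remaining steps (bounded differences verification and the Jensen/ghost-sample symmetrization) are routine.
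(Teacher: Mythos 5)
Your proposal is correct and takes essentially the same route as the paper: the paper black-boxes your McDiarmid-plus-symmetrization step by citing Corollary 6 of Maurer et al., and then, exactly as you propose, applies the Lipschitz contraction lemma (their Lemma 7) followed by Lemma~\ref{lemma:F bound} to bound the Rademacher term by $\frac{2L\alpha}{mT}\max_{1\leq t\leq T}\lVert x_t\rVert_{1,\infty}$. The contraction-over-a-coupled-constraint-set concern you raise is handled in the paper the same way you resolve it, via the multi-task version of the contraction inequality.
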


\begin{proof}
By the Corollary 6 from~\cite{maurer2013sparse}, $\forall{\epsilon}>0$, we have with probability greater than $1-\epsilon$ that:
\begin{equation}
\begin{split}
\mathbb{E}(w) & = \frac{1}{T} \displaystyle\sum_{t=1}^T\mathbb{E}_{(x,y)\sim{\mu_t}}[\mathcal{L}(\langle w_{t},x\rangle,y)] \\
& \leq \frac{1}{mT}\displaystyle\sum_{t,i}^{T,m}\mathcal{L}(\langle w_{t},x_{ti}\rangle,y_{ti}) + \mathbb{E}\{\sup_{w\in\mathcal{F}_{\alpha,\beta}}\frac{2}{mT}\displaystyle\sum_{t=1}^T\displaystyle\sum_{i=1}^m\sigma_{ti}\mathcal{L}(\langle w_t, x_{ti}\rangle) \} + \sqrt{\frac{9\ln{2/\epsilon}}{2mT}} \\
& = \mathbb{\hat{E}}(w|Z) + \hat{\mathcal{R}} + \sqrt{\frac{9\ln{2/\epsilon}}{2mT}}
\end{split}
\label{eq:Thm3proof1}
\end{equation}

Here we simply denote the second term in the second last equation as $\hat{\mathcal{R}}$.

By Assumption~\ref{ass:obj_function}, Lemma~\ref{lemma:F bound} and the Lipschitz property from Lemma 7 in~\cite{maurer2013sparse}, we have:
\begin{equation}
\begin{split}
\frac{mT}{2}\hat{\mathcal{R}} & = \mathbb{E}\{\sup_{w\in\mathcal{F}_{\alpha,\beta}}\displaystyle\sum_{t=1}^T\displaystyle\sum_{i=1}^m\sigma_{ti}\mathcal{L}(\langle w_t, x_{ti}\rangle, y_{ti}) \}  \\
& \leq L\mathbb{E}\{\sup_{w\in\mathcal{F}_{\alpha,\beta}}\displaystyle\sum_{t=1}^T\displaystyle\sum_{i=1}^m\sigma_{ti}\langle w_t, x_{ti}\rangle \} \ \ \ \ \text{(Lemma 7 in ~\cite{maurer2013sparse})}\\
& \leq L\alpha\max_{1\leq t \leq T}\left\lVert x_t \right\rVert_{1,\infty} \ \ \ \ \text{(Lemma~\ref{lemma:F bound})}
\end{split}
\label{eq:Thm3proof2}
\end{equation}
By combining two results above gives the whole proof.
\end{proof}

Now we can give the proof for~\autoref{thm:error bound}.

\begin{proof}
Since $\forall{Z}$, $\mathbb{\hat{E}}(w^*|Z)-\mathbb{\hat{E}}(w_{(Z)}^*|Z)\geq 0$, we have:
\begin{equation}
\begin{split}
\mathbb{E}(w_{Z}^*) & = \mathbb{E}(w_{Z}^*) - \mathbb{E}(w^*) + \mathbb{E}(w^*) \leq \mathbb{\hat{E}}(w^*|Z)-\mathbb{\hat{E}}(w_{(Z)}^*|Z) + \mathbb{E}(w_{Z}^*) - \mathbb{E}(w^*) + \mathbb{E}(w^*)
\end{split}
\label{eq:Thm1proof1}
\end{equation}
Therefore,
\begin{equation}
\begin{split}
\mathbb{E}(w_{Z}^*) - \mathbb{E}(w^*) & \leq \mathbb{\hat{E}}(w^*|Z)-\mathbb{\hat{E}}(w_{(Z)}^*|Z) + \mathbb{E}(w_{Z}^*) - \mathbb{E}(w^*) \\ &\leq\sup_{w\in\mathcal{F}_{\alpha,\beta}}\{|\mathbb{E}(w)-\mathbb{\hat{E}}(w|Z)| \} + \mathbb{\hat{E}}(w^*|Z) - \mathbb{E}(w^*)
\end{split}
\label{eq:Thm1proof2}
\end{equation}
By the Hoeffding's inequality from Theorem 6.14 in~\cite{zhou2013feafiner}, we can bound the last two terms. Hence, by \autoref{thm:6}, with probability at least $1-\epsilon$, we have:
\begin{equation}
\begin{split}
\mathbb{E}(w_{Z}^*) - \mathbb{E}(w^*) & \leq \sup_{w\in\mathcal{F}_{\alpha,\beta}}|\mathbb{E}(w)-\mathbb{\hat{E}}(w|Z)| + \sqrt{\frac{\ln(2/\epsilon)}{2mT}}\\
& \leq \frac{2L\alpha}{mT}\max_{1\leq t \leq T}\left\lVert x_t \right\rVert_{1,\infty} + 2\sqrt{\frac{2\ln{2/\epsilon}}{mT}} 
\end{split}
\label{eq:Thm1proof3}
\end{equation}
Here the first inequality uses the Hoeffding's inequality while the second one uses \autoref{thm:6}.
\end{proof}

% \textcolor{red}{Please check the paragraph below}
\subsection{Bound Comparison}
\label{bound comparison}
In this section, we provide another proof for calculating the upper bound for $\mathbb{E}\{F_{\sigma}\}$, by extending the Lemma 11 ~\cite{maurer2013sparse}, which is a very commonly used way for deriving the generalization error in multi-task learning problem. We will show that under some mild assumptions the error bound from Lemma~\ref{lemma:F bound} in our main paper is better (tighter) than that derived by Lemma 11 ~\cite{maurer2013sparse}.

\begin{lemma}
\label{lemma:F bound old}
For the random variable $F_{\sigma}$, we have
\begin{equation}
\begin{split}
\mathbb{E}\{F_{\sigma}\}\leq \alpha\sqrt{\displaystyle\sum_{t=1}^T\displaystyle\sum_{i=1}^m\left\lVert x_{ti}\right\rVert_{2}^{2}}
\end{split}
\label{eq:lemma2}
\end{equation}
where the expectation is w.r.t. $\sigma$ and $x_{ti}\in\mathbb{R}^{d}$ is the i-th instance of the input feature for the t-th task.
\end{lemma}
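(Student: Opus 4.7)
\textbf{Proof proposal for Lemma~\ref{lemma:F bound old}.}
The plan is to first reduce the supremum over $\mathcal{F}_{\alpha,\beta}$ to a supremum over the larger set $\mathcal{F}_\alpha = \{w\in\mathbb{R}^{d\times T}:\sum_t \|w_t\|_1 \le \alpha\}$, then convert the Rademacher complexity into the expected maximum of a family of Rademacher sums, and finally bound that expected maximum by an $L_2$-type quantity via Jensen's inequality and direct second-moment computation. This is the standard Maurer-style scheme, adapted to the multi-task $\ell_1$ constraint used in our model.

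First, I would argue exactly as in the proof of Lemma~\ref{lemma:F bound} that the sign-level constraint indexed by $\beta$ can be dropped when we are maximizing a linear functional of $w$: for fixed $\sigma$ and $X$, the map $w\mapsto \sum_{t,i}\sigma_{ti}\langle w_t,x_{ti}\rangle$ is linear, hence its maximum over the polytope $\mathcal{F}_{\alpha}$ is attained at a vertex where a single coordinate equals $\pm\alpha$ and the rest are zero; such vertices satisfy the sign constraint trivially, so the supremum over $\mathcal{F}_{\alpha,\beta}$ equals the supremum over $\mathcal{F}_\alpha$. From this I get
\begin{equation*}
F_\sigma \;=\; \alpha\,\max_{1\le t\le T,\,1\le j\le d}\Bigl|\sum_{i=1}^m \sigma_{ti}\,x_{tij}\Bigr|.
\end{equation*}

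Next, denote $S_{tj}=\sum_{i=1}^m \sigma_{ti}\,x_{tij}$. By Jensen's inequality (applied to the convex function $u\mapsto u^2$) and the trivial inequality $\max_{t,j} S_{tj}^2 \le \sum_{t,j} S_{tj}^2$,
\begin{equation*}
\bigl(\mathbb{E}\max_{t,j}|S_{tj}|\bigr)^2 \;\le\; \mathbb{E}\!\left[\max_{t,j} S_{tj}^2\right] \;\le\; \sum_{t=1}^T\sum_{j=1}^d \mathbb{E}[S_{tj}^2].
\end{equation*}
Using the independence of the Rademacher variables and $\mathbb{E}[\sigma_{ti}\sigma_{ti'}]=\delta_{ii'}$, I compute $\mathbb{E}[S_{tj}^2]=\sum_{i=1}^m x_{tij}^2$, hence the right-hand side collapses to $\sum_{t,i}\|x_{ti}\|_2^2$. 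Combining these steps gives $\mathbb{E}[F_\sigma] \le \alpha\sqrt{\sum_{t,i}\|x_{ti}\|_2^2}$, as claimed.

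The only step that requires any care is the first one: one must verify that passing from $\mathcal{F}_{\alpha,\beta}$ to $\mathcal{F}_\alpha$ does not loosen the bound, which is an extremal-point argument that reuses the justification already given in the proof of Lemma~\ref{lemma:F bound}. The subsequent Jensen / second-moment computation is routine. A comparison with Lemma~\ref{lemma:F bound} can then be made by noting that $\alpha\max_t\|x_t\|_{1,\infty}$ has no $\sqrt{mT}$-type growth whereas $\alpha\sqrt{\sum_{t,i}\|x_{ti}\|_2^2}$ does, so under mild boundedness assumptions on the features the bound proved here is looser than the one in Lemma~\ref{lemma:F bound}, which is precisely the comparison claimed in this subsection.
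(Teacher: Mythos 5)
Your proof is correct, but it takes a genuinely different route from the paper's. The paper bounds $\mathbb{E}\{F_\sigma\}$ by applying Cauchy--Schwarz twice (first $\langle w_t,\sum_i\sigma_{ti}x_{ti}\rangle\le\|w_t\|_2\,\|\sum_i\sigma_{ti}x_{ti}\|_2$, then over $t$), invoking $\sqrt{\sum_t\|w_t\|_2^2}\le\sum_t\|w_t\|_1\le\alpha$, and finishing with Jensen plus the second-moment identity for Rademacher sums. You instead reuse the extremal-point argument from Lemma~\ref{lemma:F bound} to evaluate the supremum exactly as $F_\sigma=\alpha\max_{t,j}|S_{tj}|$ with $S_{tj}=\sum_i\sigma_{ti}x_{tij}$, and only then relax, via $\bigl(\mathbb{E}\max_{t,j}|S_{tj}|\bigr)^2\le\sum_{t,j}\mathbb{E}[S_{tj}^2]=\sum_{t,i}\|x_{ti}\|_2^2$. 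Both arrive at the same bound. Your version has the pedagogical advantage of making the comparison with Lemma~\ref{lemma:F bound} transparent: both bounds are then seen as two different relaxations of the single quantity $\alpha\,\mathbb{E}\max_{t,j}|S_{tj}|$, one keeping the max ($\|x_t\|_{1,\infty}$) and one replacing max by sum. The paper's version is closer to the standard Maurer-style argument it is explicitly extending, which matters since the point of this subsection is to contrast with that literature. One caveat: your closing heuristic comparison (``no $\sqrt{mT}$-type growth'') is looser than the paper's actual claim, which is an exact characterization (the Lemma~\ref{lemma:F bound} bound is tighter if and only if Assumption~\ref{ass:tight condition} holds); but that is outside the statement being proved here.
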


\begin{proof}
\begin{equation}
\begin{split}
\mathbb{E}\{F_{\sigma}\} & =  \mathbb{E}\{\sup_{w\in\mathcal{F}_{\alpha,\beta}}\displaystyle\sum_{t=1}^T\displaystyle\sum_{i=1}^m\sigma_{ti}\langle w_t, x_{ti}\rangle \} \\
& = \mathbb{E}\{\sup_{w\in\mathcal{F}_{\alpha,\beta}}\displaystyle\sum_{t=1}^T\langle w_t, \displaystyle\sum_{i=1}^m \sigma_{ti}x_{ti}\rangle \} \ \ \ \ \text{(Cauchy-Schwarz inequality)} \\
& \leq \mathbb{E}\{\sup_{w\in\mathcal{F}_{\alpha,\beta}}\displaystyle\sum_{t=1}^T\left\lVert w_t\right\rVert_{2}\cdot \left\lVert \displaystyle\sum_{i=1}^m \sigma_{ti}x_{ti}\right\rVert_{2} \} \ \ \ \  \text{(Cauchy-Schwarz inequality)} \\
& \leq \mathbb{E}\{\sup_{w\in\mathcal{F}_{\alpha,\beta}}\sqrt{\displaystyle\sum_{t=1}^T\left\lVert w_t\right\rVert_{2}^{2}}\cdot \sqrt{\displaystyle\sum_{t=1}^T\left\lVert \displaystyle\sum_{i=1}^m \sigma_{ti}x_{ti}\right\rVert_{2}^{2}} \} \\
& = \sup_{w\in\mathcal{F}_{\alpha,\beta}}\sqrt{\displaystyle\sum_{t=1}^T\left\lVert w_t\right\rVert_{2}^{2}} \cdot \mathbb{E}\{\sqrt{\displaystyle\sum_{t=1}^T\left\lVert \displaystyle\sum_{i=1}^m \sigma_{ti}x_{ti}\right\rVert_{2}^{2}}\} \\
& \leq \alpha\sqrt{\displaystyle\sum_{t=1}^T\displaystyle\sum_{i=1}^m\left\lVert x_{ti}\right\rVert_{2}^{2}}
\end{split}
\label{eq:lemma2proof_1}
\end{equation}
 The last inequality is because for any $w\in\mathcal{F}_{\alpha,\beta}$, we have:
\begin{equation}
\begin{split}
\sqrt{\displaystyle\sum_{t=1}^T\left\lVert w_t\right\rVert_{2}^{2}} \leq \displaystyle\sum_{t=1}^T\left\lVert w_t\right\rVert_{1} \leq \alpha
\end{split}
\label{eq:lemma2proof_2}
\end{equation}
In addition, by Jensen's Inequality, for any non-negative random variable $X$, $\mathbb{E}\{\sqrt{X}\} \leq \sqrt{\mathbb{E}\{X\}}$, and for Rademacher variable $\sigma$, $\mathbb{E}\{\sigma\}=0$ and $Var(\sigma)=1$. Hence, we have:
\begin{equation}
\begin{split}
& \mathbb{E}\{\sqrt{\displaystyle\sum_{t=1}^T\left\lVert \displaystyle\sum_{i=1}^m \sigma_{ti}x_{ti}\right\rVert_{2}^{2}}\} \leq \sqrt{\displaystyle\sum_{t=1}^T\mathbb{E}\{\left\lVert \displaystyle\sum_{i=1}^m \sigma_{ti}x_{ti}\right\rVert_{2}^{2}\}}\\ 
& = \sqrt{\displaystyle\sum_{t=1}^T\displaystyle\sum_{i=1}^m\left\lVert x_{ti}\right\rVert_{2}^{2}}
\end{split}
\label{eq:lemma2proof_3}
\end{equation}
\end{proof}

Recall the result in Lemma~\ref{lemma:F bound}, where
\begin{equation}
\begin{split}
\mathbb{E}\{F_{\sigma}\}\leq \alpha\cdot\max_{1\leq t \leq T}\left\lVert x_t \right\rVert_{1,\infty}
\end{split}
\label{eq:Lemma2_1}
\end{equation}

This bound is tighter than that in Lemma~\ref{lemma:F bound old} under the following mild assumption over $X$:

\begin{assumption}
\label{ass:tight condition}
For any input data $X\in(\mathbb{R}^d)^{mT}$, denote $t^*,j^*=\arg\max_{1\leq t\leq T,1\leq j\leq d}\displaystyle\sum_{i=1}^m |x_{tij}|$. The following inequality holds:
\begin{equation}
\begin{split}
\displaystyle\sum_{(t,j)\neq (t^*,j^*)} x_{tij}^2 > 2\displaystyle\sum_{1\leq k<l\leq m} |x_{t^*kj^*} x_{t^*lj^*}|
\end{split}
\label{eq:Assumption1}
\end{equation}
\end{assumption}

Now we prove that for any $X$ that has unique maxima $t^*,j^*=\arg\max_{t,j}\displaystyle\sum_{i=1}^m |x_{tij}|$, the bound in Lemma~\ref{lemma:F bound} is better than that in Lemma~\ref{lemma:F bound old} if and only if the above assumption holds.

\begin{lemma}
For any input data $X\in(\mathbb{R}^d)^{mT}$ with the uniqueness of $t^*,j^*$ defined above, 
\begin{equation}
\begin{split}
\alpha\cdot\max_{1\leq t \leq T}\left\lVert x_t \right\rVert_{1,\infty} < \alpha\sqrt{\displaystyle\sum_{t=1}^T\displaystyle\sum_{i=1}^m\left\lVert x_{ti}\right\rVert_{2}^{2}}
\end{split}
\label{eq:lemma41}
\end{equation}
if and only if Assumption~\ref{ass:tight condition} holds.
\end{lemma}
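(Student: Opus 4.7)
The plan is to reduce the claimed biconditional to a direct algebraic identity. Both sides of the target inequality are manifestly non-negative and share the common factor $\alpha$, which can be assumed positive (otherwise the strict inequality cannot hold). So I would first cancel $\alpha$ and then square both sides---a step which preserves strict inequality in both directions on non-negative quantities. This converts the claim into
\[
\Big(\max_{1\le t\le T}\left\lVert x_t \right\rVert_{1,\infty}\Big)^{2} \;<\; \sum_{t=1}^{T}\sum_{i=1}^{m}\left\lVert x_{ti}\right\rVert_{2}^{2}.
\]

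Next I would unpack the matrix norm. Because $\left\lVert x_t \right\rVert_{1,\infty}$ is the largest $L_1$ norm over the $d$ columns of $x_t\in\mathbb{R}^{m\times d}$, we have $\max_t \left\lVert x_t \right\rVert_{1,\infty} = \max_{t,j}\sum_i |x_{tij}|$, which by the uniqueness of $(t^*,j^*)$ equals $\sum_i |x_{t^*ij^*}|$. Meanwhile, $\sum_{t,i}\left\lVert x_{ti}\right\rVert_{2}^{2} = \sum_{t,i,j} x_{tij}^{2}$. Expanding the square on the left via $\big(\sum_i |x_{t^*ij^*}|\big)^{2} = \sum_i x_{t^*ij^*}^{2} + 2\sum_{k<l} |x_{t^*kj^*}||x_{t^*lj^*}|$, and splitting off the $(t^*,j^*)$-column contribution $\sum_i x_{t^*ij^*}^{2}$ from the right-hand sum, the common term cancels and the inequality reduces to
\[
2\sum_{1\le k<l\le m} |x_{t^*kj^*}\,x_{t^*lj^*}| \;<\; \sum_{(t,j)\ne(t^*,j^*)} x_{tij}^{2},
\]
which is precisely Assumption~\ref{ass:tight condition}.

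There is no serious obstacle; the argument is entirely a chain of reversible algebraic identities. The only points requiring care are (i) verifying that every transformation is truly an equivalence---the squaring step is where this matters, and it is justified by non-negativity of both sides---and (ii) interpreting the notation $\sum_{(t,j)\ne(t^*,j^*)} x_{tij}^{2}$ in the assumption as also summing over all $i$ for each surviving column index $(t,j)$. Under this natural reading the biconditional follows immediately from the chain of equivalences above, so the entire proof boils down to a careful bookkeeping of terms rather than any substantive inequality estimate.
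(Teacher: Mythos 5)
Your proof is correct and follows essentially the same route as the paper's: both arguments square the two sides, expand $\bigl(\sum_i |x_{t^*ij^*}|\bigr)^2$ into diagonal plus cross terms, cancel the common $\sum_i x_{t^*ij^*}^2$, and reduce the strict inequality to exactly the inequality in Assumption~\ref{ass:tight condition}. Your explicit remarks on the reversibility of the squaring step and on reading $\sum_{(t,j)\neq(t^*,j^*)} x_{tij}^2$ as implicitly summing over $i$ are sensible clarifications of the same computation.
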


\begin{proof} Notice for any positive $\alpha$, it can be cancelled without any effect on the proof.
\begin{equation}
\begin{split}
& \ \ \ \ \displaystyle\sum_{t=1}^T\displaystyle\sum_{i=1}^m\left\lVert x_{ti}\right\rVert_{2}^{2} - (\max_{1\leq t \leq  T}\left\lVert x_t \right\rVert_{1,\infty})^2 \\
& = \displaystyle\sum_{t=1}^T\displaystyle\sum_{i=1}^m\left\lVert x_{ti}\right\rVert_{2}^{2} - (\max_{1\leq t \leq T, 1\leq j \leq d} \displaystyle\sum_{i=1}^m |x_{tij}|)^2 \\
& =  \displaystyle\sum_{t=1}^T\displaystyle\sum_{i=1}^m\left\lVert x_{ti}\right\rVert_{2}^{2} - (\left\lVert x_{t*j*} \right\rVert_{1})^2 \\
& = \displaystyle\sum_{t,i,j}^{T,m,d} x_{tij}^2 - \{\displaystyle\sum_{i=1}^m x_{t*ij*}^2 + 2\displaystyle\sum_{1\leq k<l\leq m} |x_{t^*kj^*} x_{t^*lj^*}|\} \\
& = \displaystyle\sum_{(t,j)\neq (t^*,j^*)} x_{tij}^2 - 2\displaystyle\sum_{1\leq k<l\leq m} |x_{t^*kj^*}x_{t^*lj^*}| 
\end{split}
\label{eq:lemma4proof}
\end{equation}
Hence, the bound in Lemma~\ref{lemma:F bound} is tighter than that in Lemma~\ref{lemma:F bound old} is equivalent to the term on RHS of the last equation in \autoref{eq:lemma4proof} is negative, i.e. the Assumption~\ref{ass:tight condition} holds.
\end{proof}

\subsection{Convergence Analysis}
\label{appendix:convergence_analysis}

The proof for \autoref{thm:convergence} is as follow:

\begin{proof}
\label{appendix:global_convergence_proof}
%% The proof, which is very technical, is presented in {\color{red} Section xx} in our supplementary material.
The SRML model with $L_1$ regularization takes the form:
\begin{equation}
\begin{split}
 \min_{\substack{w_1,\cdots,w_T \\ u_1,\cdots,u_T}} \displaystyle\sum_{t=1}^T&\mathcal{L}_t(w_t) + \lambda\displaystyle\sum_{t=1}^T \left\lVert u_t \right\rVert_{1} + c\cdot\displaystyle\sum_{t=1}^{T-1}\displaystyle\sum_{j=1}^d \max {(0,-u_{t,j}u_{t+1,j})} \\
\text{s.t.\quad} &  {[w_1;\cdots;w_T]-[u_1;\cdots;u_T]=0}
\end{split}
\label{eq:swmtl_new}
\end{equation}
where $\mathcal{L}$ is either least square loss or logistic loss. The above problem amounts to a non-convex objective with equality constraint one, which is a special case of the following multi-convex inequality-constrained problem:
\begin{equation}
\begin{split}
\min \nolimits_{x_1,\cdots,x_n,z}& F(x_1,\cdots,x_n,z) = f(x_1,\cdots,x_n) + \sum\nolimits_{i=1}^n g_i(x_i)+h(z) \\
& \text{s.t.} l(x_1,\cdots,x_n)\leq 0, \ \ \ \ \sum\nolimits_{i=1}^n A_i x_i-z=0
\end{split}
\label{eq:miADMM}
\end{equation}
For this type of problem, ~\cite{wang2019multi} provided the sufficient conditions for proving the global convergence when using multi-convex inequality-constrained Alternating Direction Method of Multipliers (miADMM), which amounts to the following: \\
(1) (Regularity of $f$ and $l$) $f(x_1,\cdots,x_n)$ and $l(x_1,\cdots,x_n)$ are proper, continuous, multi-convex and possibly non-smooth functions. \\
(2) (Regularity of $g_i$) $g_i(x_i)$ $(i=1,\cdots,n)$ are proper, continuous, convex and possibly non-smooth functions. \\
(3) (Regularity of $h$) $h(z)$ is a proper, convex and Lipschitz differentiable (with constant $H$) function. 

Since SRML does not have the inequality constraint, the regularity of $l$ is satisfied. To fit SRML into miADMM, we treat our loss term as $h(z)$, our regularization term as $g_i$ $(i=1,\cdots,n)$, and slacking term as $f(x_1,\cdots,x_n)$. Now it suffices to prove each term of SRML satisfies the above conditions. 

First, we prove the regularity of $f$, i.e. the first condition, which corresponds to our slacking term. Since the slacking term as a function of any $u_{t,j}$ with other $u$ fixed is basically $y = c\cdot max(0,ax)$, where $a$ is a constant, it's simply proper, continuous and convex. In addition, this function is non-smooth only at $x=0$. Hence, the slacking term is proper, continuous, multi-convex and non-smooth and can be fit into $f(x_1,\cdots,x_n)$.

Second, we prove the regularity of $g_i$ $(i=1,\cdots,n)$, i.e. the second condition, which corresponds to our $L_1$ regularization term. Since the $L_1$ norm of $w_t$ is simply a sum of absolute value function, it's proper, continuous and non-smooth at $w = 0$. The multi-convex is also trivial to prove since $L_1$ norm is a separate function of each weight.

Third, we prove the regularity of $h(z)$, i.e. the last condition, which corresponds to our loss function. In our paper, we consider both regression and classification problem, so the loss function will be either least square loss or logistic loss. Next, We will prove in both case the loss function satisfies the condition. 

For least square loss, $\mathcal{L}_t(w_t) = \left\lVert Y_t - X_t w_t \right\rVert_{2}^2$, which is a quadratic function w.r.t. $w_t$. Hence, it's easy to show the function is proper and convex. Now we want to show it's also Lipschitz differentiable. Since the loss for different tasks are seperate, it suffices to show for one single task the loss function $\mathcal{L}_t(w_t)$ is Lipschitz differentiable. \\ \\ \\ \\ \\

For any $w_t^{'}$, $w_t^{''}$ $\in \mathbb{R}^d$,
\begin{equation}
\begin{split}
\left\lVert \nabla\mathcal{L}_t(w_t^{'}) - \nabla\mathcal{L}_t(w_t^{''}) \right\rVert & = \left\lVert (2X_t^{\intercal}X_t w_t^{'}-2X_t^{\intercal}Y_t) - (2X_t^{\intercal}X_t w_t^{''}-2X_t^{\intercal}Y_t) \right\rVert \\
& = \left\lVert 2X_t^{\intercal}X_t(w_t^{'} - w_t^{''}) \right\rVert \\
& \leq 2\left\lVert X_t^{\intercal}X_t\right\rVert \cdot \left\lVert w_t^{'} - w_t^{''} \right\rVert
\end{split}
\label{eq:ls_loss-Lipschitz}
\end{equation}

For the logistic loss, it's defined as follow:
\begin{equation}
\begin{split}
\mathcal{L}_t(w_t) = \frac{1}{m}\displaystyle\sum_{j=1}^{m}[-Y_{tj}\log\sigma(X_{tj}w_t) - (1-Y_{tj})\log\sigma(-X_{tj}w_t)] 
\end{split}
\label{eq:logisticloss}
\end{equation}
where $\sigma()$ is the sigmoid function.

For any $w_t^{'}$, $w_t^{''}$ $\in \mathbb{R}^d$,
\begin{equation}
\small
\begin{split}
&  \ \ \ \ \left\lVert \nabla\mathcal{L}_t(w_t^{'}) - \nabla\mathcal{L}_t(w_t^{''}) \right\rVert \\
& = \left\lVert \frac{1}{m}\displaystyle\sum_{j=1}^{m}\{[Y_{tj}-\sigma(X_{tj}w_t^{'})]\cdot X_{tj} - [Y_{tj}-\sigma(X_{tj}w_t^{''})]\cdot X_{tj} \} \right\rVert \\
& \leq \frac{1}{m}\displaystyle\sum_{j=1}^{m} \left\lVert X_{tj} \right\rVert \cdot\left\lVert [Y_{tj}-\sigma(X_{tj}w_t^{'})] - [Y_{tj}-\sigma(X_{tj}w_t^{''})]\right\rVert \\
& = \frac{1}{m}\displaystyle\sum_{j=1}^{m}\left\lVert X_{tj} \right\rVert \cdot \left\lVert \sigma(X_{tj}w_t^{''}) - \sigma(X_{tj}w_t^{'}) \right\rVert \\
& = \frac{1}{m}\displaystyle\sum_{j=1}^{m}\left\lVert X_{tj} \right\rVert \cdot \left\lVert \sigma^{'}(\xi_j) \cdot (X_{tj}w_t^{''} - X_{tj}w_t^{'}) \right\rVert \\
& \leq \frac{1}{m}\displaystyle\sum_{j=1}^{m} \left\lVert X_{tj} \right\rVert^{2} \cdot | \sigma^{'}(\xi_j) | \cdot \left\lVert w_t^{'} - w_t^{''} \right\rVert \\
& \leq \frac{1}{m}\displaystyle\sum_{j=1}^{m} \left\lVert X_{tj} \right\rVert^{2} \cdot \left\lVert w_t^{'} - w_t^{''} \right\rVert \ \ \text{($\sigma^{'}(\cdot) < 1$)}
\end{split}
\label{eq:logistic_loss-Lipschitz}
\end{equation}
Here the fourth equality holds by using the mean value theorem, where $\xi_j \in (X_{tj}w_t^{'},X_{tj}w_t^{''})$ (suppose $X_{tj}w_t^{'}\leq X_{tj}w_t^{''}$). The condition for mean value theorem is the function is continuous on the closed interval and differentiable on the open interval, which is satisfied by the logistic function.The last inequality is because the derivative of the logistic function is upper bounded by 1, i.e. $\sigma^{'}(x) < 1, \ \ \forall{x\in \mathbb{R}}$.
\end{proof}

The proof for \autoref{thm:Converge Nash point} is as follow:

\begin{proof}
We first prove all the $A_i$ in \autoref{eq:miADMM} are of full rank, and then prove the convergence to a Nash point. \\
Consider the SRML problem \autoref{eq:swmtl_new}, the equality constraint is $[w_1;\cdots;w_T]-[u_1;\cdots;u_T]=0$. To fit this into the notation of \autoref{eq:miADMM}, we only need to find a sequence of $\{A_t\}, \ \  t\in \{1,2,\cdots,T\}$, s.t.
\begin{equation}
\begin{split}
\displaystyle\sum_{t=1}^{T}A_t w_t = [w_1;\cdots;w_T]
\end{split}
\label{eq:A_t}
\end{equation}
We can define $A_t=[O_1;\cdots;O_{t-1};I_t;O_{t+1};\cdots;O_T]$, where each $O_{t}$ is a $d\times d$ zero matrix and $I_t$ is a $d\times d$ identity matrix. Notice each $A_t$ is a $(d\cdot T)\times d$ matrix and part of it is an identity matrix with same width, so $A_t$ is of full column rank. Since $A_t$ has more rows than columns, $A_t$ is of full rank. \\
Considering it is sufficient condition to prove the convergence to a Nash point as demonstrated in Theorem 2~\cite{wang2019multi}, the proof is completed.
\end{proof}

The proof for \autoref{thm:converge rate} is as follow:
\begin{proof}
The proof for \autoref{thm:converge rate} simply follows same procedure as Theorem 3~\cite{wang2019multi}, which is similar to Lemma 1.2 in~\cite{deng2017parallel}.
\end{proof}

\section{Experiments}

\subsection{Real-world Datasets}
\label{sec:real_world_datasets}

This section describes the real-world datasets used to evaluate the performance of our approach and comparison methods.

\begin{description}

\item[School]
records the examination scores of 15362 students from 139 secondary schools during the three years from 1985 to 1987 in London~\cite{10.2307/2348496}\footnote{\url{http://ttic.uchicago.edu/~argyriou/code/index.html}}. 
Each student row contains 26 binary features, including school-specific and student-specific attributes. 
The corresponding examination score is an integer. 
The problem of predicting the examination score of the students formulated as a multi-task regression problem assign each school corresponds to a task. 

\item[Computer buyers survey] 
multi-output regression dataset obtained from a survey of 190 people about their likelihood of purchasing  20 different personal computers~\cite{lenk1996hierarchical}\footnote{\url{https://github.com/probml/pmtk3/tree/master/data/conjointAnalysisComputerBuyers}}. 
Each computer row contains 13 binary variables related to specifications. 
Moreover, each task has ratings (on a scale of 0 to 10) given by a person to each of the 20 computers. 

\item[Facebook metrics]
related to posts published during the year of 2014 on the Facebook's page of a renowned cosmetics brand.
The dataset contains 500  rows and part of the features analyzed by~\cite{moro2016predicting}\footnote{\url{https://archive.ics.uci.edu/ml/datasets/Facebook+metrics}}.
It includes seven features known before post-publication and 12 features for evaluating post-impact.
We use the category attribute as the task indicator, which yields three tasks.
The total number of interactions of each post is the target variable.
 
\item[Traffic SP] 
related to measurements of the behavior of the urban traffic of the city of Sao Paulo in Brazil~\cite{Ferreira2016CombinationOA}\footnote{\url{https://archive.ics.uci.edu/ml/datasets/Behavior+of+the+urban+traffic+of+the+city+of+Sao+Paulo+in+Brazil}}. 
There are 135 records from Monday to Friday during the week of December 14, 2009. 
The measurements include records from 7:00 to 20:00 every 30 minutes. We define two tasks: measurements before and after midday. 
The target variable is the percentage of slowness in the traffic.

\item[Cars]
this dataset contains the specifications for 428 new vehicles for the 2004 year~\cite{lock19931993}\footnote{\url{https://github.com/probml/pmtk3/tree/master/data/04cars}}. 
The variables include price, measurements about the size of the vehicle, and fuel efficiency.
Each task is related to a type of car specified by the first four binary variables, resulting in four tasks. 
The other variables are part of the features, and the price is the target variable for each task. 
This dataset contains mixed variable types, i.e., categorical and real values.

% \item[Unrest Events]
% this dataset contains tweets related to social unrest in several latin american countries from Jan 2013 to Dec 2014~\cite{zhao2018distant}\footnote{\url{https://storage.googleapis.com/ml-research-datasets/multitask/multilingual.zip}}. 
%The features denote civil unrest related and non-related keywords or phrases of the tweets, with its translations in Spanish, Portuguese, and English. 
% Each data sample is the daily tweet observation with the features mentioned above for a specific city. 
% The label for each data sample is "1" if there is an unrest event on the next day; and otherwise "0". 
% We define each task as the samples that belong to a country; thus, we have ten tasks in total. 

% \item[Crisis Events]
% dataset related to crisis events such as Earthquakes, Floods, and other natural disasters in different parts of the world~\cite{imran2016lrec}\footnote{\url{https://crisisnlp.qcri.org/lrec2016/lrec2016.html}}.  
% The words in the tweets represent the features and similar to the \textit{Unrest events} datasets, some of the events have tweets in two languages: English and Spanish.
% The target variable indicates if the tweet is related or not to the crisis event.
% Each of the ten crisis events corresponds to a task.

\end{description}

\subsection{Additional Results on Feature Learning}
\label{results on feature learning}

% \todo[inline]{AUC for unrest}

% \begin{table*}[htbp]
%   \centering
%   \caption{classification task performance on real-world datasets (AUC). }
%     \begin{tabular}{lrrrrrrrrrr}
%     \toprule
%           & \multicolumn{1}{l}{\textbf{AR}} & \multicolumn{1}{l}{\textbf{BR}} & \multicolumn{1}{l}{\textbf{CL}} & \multicolumn{1}{l}{\textbf{CO}} & \multicolumn{1}{l}{\textbf{EC}} & \multicolumn{1}{l}{\textbf{SV}} & \multicolumn{1}{l}{\textbf{MX}} & \multicolumn{1}{l}{\textbf{PY}} & \multicolumn{1}{l}{\textbf{UY}} & \multicolumn{1}{l}{\textbf{VE}} \\
%     \midrule
%     Lasso &       &       &       &       &       &       &       &       &       & 65.32 \\
%     L21   &       &       &       &       &       &       &       &       &       & 65.22 \\
%     CASO  &       &       &       &       &       &       &       &       &       &  \\
%     SRMTL &       &       &       &       &       &       &       &       &       &  \\
%     WMTL  &       &       &       &       &       &       &       &       &       &  \\
%     SWMTL &       &       &       &       &       &       &       &       &       & 65.23 \\
%     \bottomrule
%     \end{tabular}%
%   \label{tab:addlabel}%
% \end{table*}%

In this section, we perform analysis of  the learned feature weights (\autoref{fig:weightstasksclf}) for the proposed model and the comparison methods using both synthetic and real-world datasets.

For each model, \autoref{fig:weightstasksclf} shows the weights of the different features (different rows) for different tasks (different columns). For each cell, the color specifies the signs of the feature weights. Specifically, if a feature weight is zero then it is colored by white, if a feature weight is positive then \textit{red} is used while it is \textit{blue} when the weight value is negative.
The first and second rows show the learned weights for the Synthetic Datasets 1 and 3, respectively, while the remaining rows are for the real-world datasets:  School (3rd row), Facebook (4th row), and TrafficSP (5th row).
In the case of our model SRML, we can see that all features across different tasks tend to better maintain the same polarity; in contrast, the comparison models cannot maintain them well, which is even more obvious in the real-world datasets. Since synthetic datasets (i.e., the first two rows) has ground truth weights (i.e., the first column), we can see that the feature weight signs learned by our model is very close to the ground truth, while the comparison methods typically perform poorly.
This study demonstrates that our method can perform better when the features across tasks share similar polarity of the weights.

\begin{figure*}[h]
\begin{center}

\begin{subfigure}[b]{\textwidth}
\includegraphics[width=\textwidth]{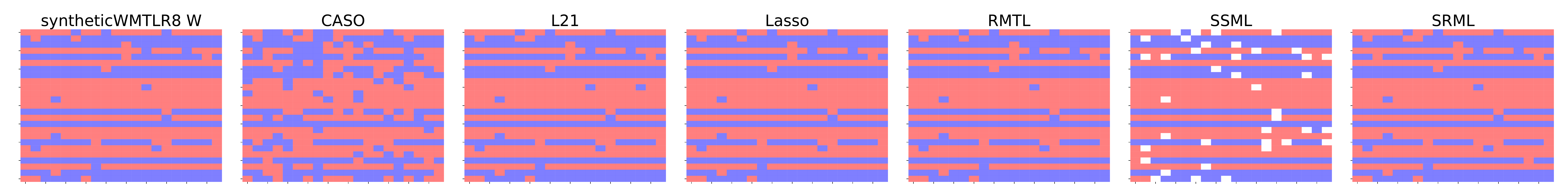}
%\caption{School dataset}
\label{default}
\end{subfigure}

\begin{subfigure}[b]{\textwidth}
\includegraphics[width=\textwidth]{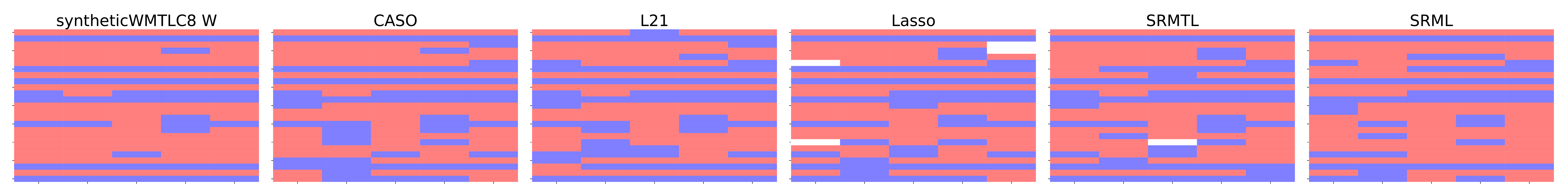}
%\caption{School dataset}
\label{default}
\end{subfigure}

\begin{subfigure}[b]{\textwidth}
\includegraphics[width=\textwidth]{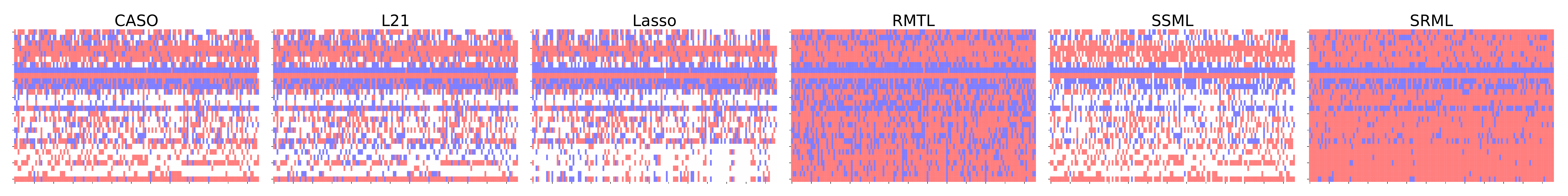}
%\caption{School dataset}
\label{default}
\end{subfigure}

\begin{subfigure}[b]{\textwidth}
\includegraphics[width=\textwidth]{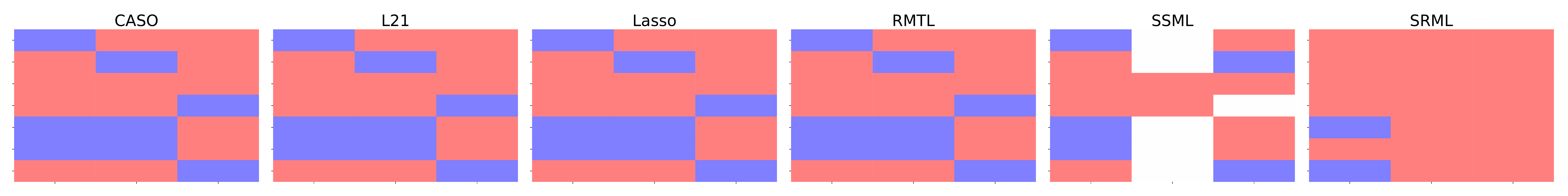}
%\caption{School dataset}
\label{default}
\end{subfigure}

\begin{subfigure}[b]{\textwidth}
\includegraphics[width=\textwidth]{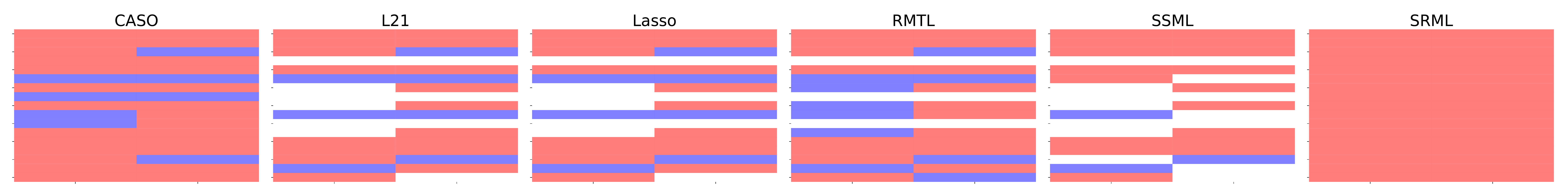}
%\caption{School dataset}
\label{default}
\end{subfigure}

\caption{Features selection for regression task on different datasets (each row per dataset).
For each model (subfigure), the weights ($y$ axis) of each task ($x$ axis) specify the polarity of selected features either positive (\textit{red}) or negative (\textit{blue}), and not selected features as (\textit{white}).
}
\label{fig:weightstasksclf}
\end{center}
\end{figure*}

\end{document}